\newtheorem{theorem}{Theorem}
\newtheorem{lemma}{Lemma}
\newtheorem{remark}{Remark}
\newtheorem{assumption}{Assumption}
\newtheorem{proposition}{Proposition}
\newtheorem{example}{Example}
\newtheorem{corollary}{Corollary}
\newcommand{\LipMethodName}{{LipLT \noindent}}
\DeclareMathOperator*{\argmax}{arg\,max}
\DeclareMathOperator*{\argmin}{arg\,min}
\def\BibTeX{{\rm B\kern-.05em{\sc i\kern-.025em b}\kern-.08em
    T\kern-.1667em\lower.7ex\hbox{E}\kern-.125emX}}
\begin{document}
\title{Provable Bounds on the Hessian of Neural Networks: Derivative-Preserving Reachability Analysis}

\author{Sina Sharifi and Mahyar Fazlyab, \IEEEmembership{Member, IEEE}
\thanks{Sina Sharifi and Mahyar Fazlyab are with the Department of  Electrical and Computer Engineering,
            Johns Hopkins University, 3400 North Charles Street
            Baltimore, MD 21218
(e-mail: {\tt\small \{sshari12, mahyarfazlyab\}@jhu.edu})} 
            }

\maketitle
\begin{abstract}
    We propose a novel reachability analysis method tailored for neural networks with differentiable activations. Our idea hinges on a sound abstraction of the neural network map based on first-order Taylor expansion and bounding the remainder. To this end, we propose a method to compute analytical bounds on the network's first derivative (gradient) and second derivative (Hessian). A key aspect of our method is loop transformation on the activation functions to exploit their monotonicity effectively. The resulting end-to-end abstraction locally preserves the derivative information, yielding accurate bounds on small input sets. Finally, we employ a branch and bound framework for larger input sets to refine the abstraction recursively. We evaluate our method numerically via different examples and compare the results with relevant state-of-the-art methods.
\end{abstract}

\begin{IEEEkeywords}
Reachability Analysis, Neural Networks, Taylor Expansion, Lipschitz Constant, Hessian Estimation.
\end{IEEEkeywords}


\section{Introduction}\label{sec:introduction}
\IEEEPARstart{T}{he} increasing use of neural networks in various applications, ranging from computer vision to control and reinforcement learning, has intensified the need to rigorously verify these models after training, especially in safety-critical domains. This verification task can typically be framed as a constraint satisfaction problem of the form
\begin{align}\label{eq:genericOptimization}
J(f(x)) \leq 0 \quad \forall x\in \mathcal{X},
\end{align}
where \( f \) is a neural network, \( \mathcal{X} \) is a bounded set of inputs, and \( J \) is a scalar-valued function that characterizes the constraint or specification we wish to verify. Defining the optimization problem \( J^\star := \sup_{x \in \mathcal{X}} J(f(x)) \), if the optimal value \( J^\star \) satisfies \( J^\star \leq 0 \), then we have a certificate that the constraint holds. Otherwise, there exists a counterexample \( x^\star \) that violates the constraint. 

\begin{figure}[t]
    \centering
     \includegraphics[width= 0.4\textwidth]{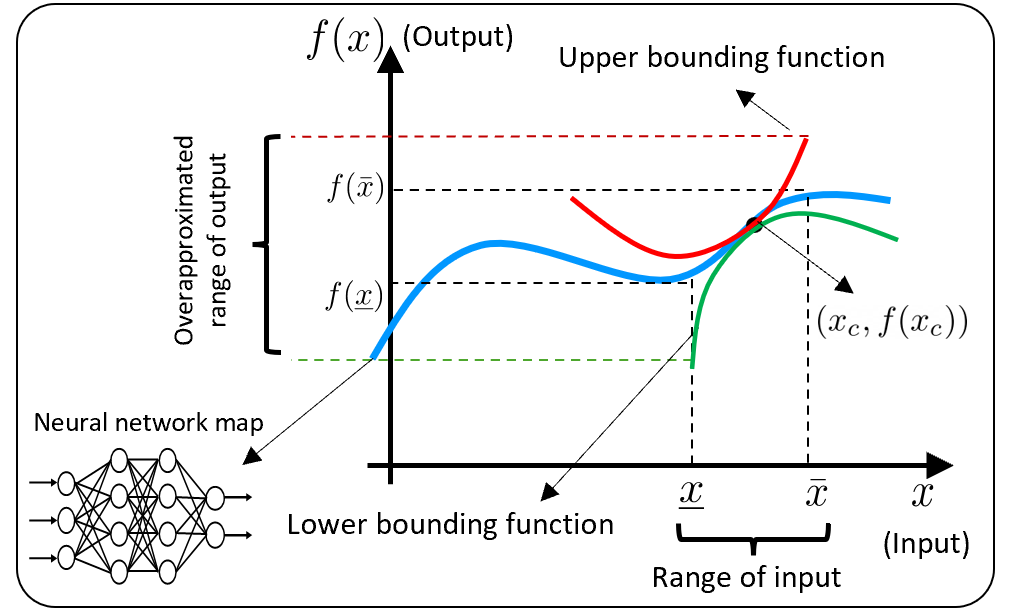}
    \caption{Overview of proposed abstraction.}
    \label{fig:overview}
\end{figure}

The literature on neural network verification has been heavily skewed toward networks with ReLU\footnote{Rectified Linear Unit.} activations. 
Under this setting, \eqref{eq:genericOptimization} can be cast as a Mixed-Integer Linear Program (MILP), which can be solved using Branch-and-Bound (BnB) methods given a sufficient computational budget. 
Due to scalability issues, state-of-the-art verification methods develop custom BnB methods to exploit the problem structure both in the bounding and branching schemes. Since these methods are tailored to ReLU activation functions, they become inherently inapplicable or at least inefficient for general differentiable activation functions such as $\tanh$ or $\mathrm{sigmoid}$. 

In this paper, we develop a verification method specifically tailored for neural networks with continuously differentiable activation functions. Our method relies on an end-to-end first-order Taylor abstraction (over-approximation) of the neural network map and bounding the remainder. In contrast to existing linear bound propagation methods, the proposed abstraction preserves the \emph{local gradient information}, allowing it to capture the function's local behavior more accurately (see \Cref{fig:overview}). To bound the remainder, we propose a scalable method to derive analytical bounds on the first derivative (gradient) and second derivative (Hessian) of twice differentiable neural networks. A key aspect of our method is \emph{loop transformation} on the activation functions to exploit their monotonicity in the resulting bounds.




To the best of our knowledge, this paper is the first to develop a \textit{gradient-preserving} sound approximation of neural networks that utilizes second-order derivative information to perform reachability analysis.


\subsection{Contributions}
In summary, focusing on the class of continuously differentiable neural networks, we make the following contributions:
\begin{itemize}[leftmargin=*]
    \item We propose a scalable method to derive analytical and differentiable upper bounds on the first derivative (gradient) of neural networks. 
    To this end, we build up on \cite{fazlyab2024certified} to develop \textit{local \LipMethodName}, an algorithm to compute upper bounds on the local Lipschitz constant of a neural network efficiently. 
    \item We develop an algorithm to compute analytical bounds on the second derivative (Hessian) of scalar-valued neural networks, which uses local LipLT as a sub-routine.  
    \item We propose a derivative-informed BnB method for reachability analysis of neural networks, relying on an end-to-end sound first-order Taylor approximation of the neural network as a bounding sub-routine. The proposed method can handle input sets represented by general zonotopes.
    \item Finally, we provide experimental results that demonstrate the effectiveness of local LipLT, and the success of our BnB algorithm in various reachability benchmarks. The source code can be found at
    \href{https://github.com/o4lc/NNHessianBounds}{https://github.com/o4lc/NNHessianBounds}.
    
\end{itemize}

This paper is structured as follows. 
In the rest of this section, we cover related work and preliminaries.
Then, in $\S$\ref{sec:ProbStat} we provide the problem formulation and assumptions. 
In $\S$\ref{sec:TaylorApp}, we propose our derivative-informed bounding method for continuously differentiable neural networks. 
In $\S$\ref{sec:LocalLT} and $\S$\ref{sec:hess}, we first develop local \LipMethodName and then utilize it to propose our algorithm to compute bounds on the Hessian of neural networks.
In $\S$\ref{sec:PropMeth} we couple our bounding method with branching to obtain a BnB-based reachability method.
Finally, in $\S$\ref{sec:NumRes} we provide numerical experiments to compare our method with the previous works. 

\subsection{Related Work}
Throughout this section, we primarily cover related work on the analysis of neural networks with differentiable activations.

\smallskip \noindent
\textbf{Incomplete Verification Methods:}
Due to the difficulty of computing the optimal value of \eqref{eq:genericOptimization},
incomplete verifiers resort to various convex relaxations \cite{singh2019abstract, anderson2020tightened, bunel2020lagrangian,  hu2020reach, shiformal, wong2018provable,chen2022deepsplit} or bound propagation techniques \cite{gowal2018effectiveness, zhang2018efficient, wang2021beta, entesari2023reachlipbnb} to compute upper bounds on $J^\star$. While some of these methods exploit the derivative information at the neuronal level \cite{fazlyab2020safety, ivanov2021verisig, kochdumper2023open}, none of them preserve the end-to-end derivative. 
%
%
%
Interval Bound Propagation (IBP) \cite{gowal2018effectiveness}, CROWN \cite{zhang2018efficient} and Beta-CROWN \cite{wang2021beta} are among the bound propagation methods, and \cite{entesari2023reachlipbnb} uses bounds on the Lipschitz constant of $J \circ f$ to find provable bounds for $J^\star$.

\smallskip \noindent
\textbf{Complete Verification Methods:}
Incomplete verification methods tend to produce vacuous bounds when the neural network is deep or the input space is large. 
However, some methods aim to verify the desired property of the network exactly.
To achieve exact certificates, some employ SMT solvers \cite{katz2017reluplex, scheibler2015towards}, while others utilize Mixed-Integer Linear Programs (MILP) \cite{tjeng2017evaluating, dutta2018output, fischetti2018deep} for networks with the ReLU activation functions.
On the other hand, many rely on BnB.
Branching is the action of splitting the space into smaller subspaces, using a sound bounding method to solve the problem on these smaller spaces, thereby achieving overall tight bounds.
The act of splitting can be done either on the input space \cite{entesari2023reachlipbnb, everett2021efficient} or the activation space \cite{tjeng2017evaluating, vincent2021reachable}. 
The main focus of the literature has been branching on ReLU activations, but most recently, \cite{shiformal} has designed a branching heuristic for differentiable activation functions. 

Different heuristics have been proposed to answer two main questions that arise when branching is done on the input space: ``Which subspace is the most promising to branch?'' and once the subspace is chosen, ``How to split that subspace?''
\begin{enumerate}[leftmargin=*]
    \item Splitting the subspace with the maximum volume, the longest edge, the worst lower bound \cite{entesari2023reachlipbnb}, based on its contraction rate \cite{ harapanahalli2023contraction} or using Monte Carlo samples \cite{xiang2020reachable, everett2020robustness} have been proposed. 
    \item Splitting along the longest axis \cite{entesari2023reachlipbnb, bunel2018unified}, the use of shadow prices \cite{rubies2019fast}, and choosing the axis that results in the best lower bound \cite{bunel2018unified} are among the proposed heuristics.
\end{enumerate}


\smallskip \noindent
\textbf{Closed-loop Verification:}
In recent years many formal verification tools have been developed for verification on hybrid systems with neural network controllers. Verisig \cite{ivanov2019verisig} converts the neural network with Sigmoid activation into a hybrid system.
Verisig 2 \cite{ivanov2021verisig}, ReachNN \cite{huang2019reachnn} and ReachNN*\cite{fan2020reachnn} approximate the neural network with Taylor models wherein each neuron is approximated using Taylor series and an associated bound on the error. 
In sharp contrast, we derive an end-to-end Taylor approximation of the network and bound the Hessian, which can in principle reduce the local wrapping effect.
OVERT \cite{sidrane2022overt} abstracts the nonlinearities with piecewise linear functions, and most recently, \cite{kochdumper2023open} approximates each neuron using a polynomial function and propagates the reachable set through the network. 
The authors use polynomial zonotopes as set representations.
Despite the ability of this set representation to capture non-convex geometries, the task of collision checking for safety verification becomes challenging \cite{huang2023difficulty}.

\smallskip \noindent
\textbf{Lipschitz Estimation:}
Many Lipschitz calculation methods focus on ReLU networks \cite{shi2022efficiently, jordan2020exactly} and only a handful are designed to handle differentiable activations \cite{latorre2020lipschitz, weng2018towards, }.
In one of the earlier attempts, \cite{szegedy2013intriguing} estimates the global Lipschitz constant as the product of the norm of each layer, which is also known as the Naive Lipschitz bound. 
LipSDP \cite{fazlyab2019efficient} uses monotonicity and Lipschitz continuity of the activation functions to form an SDP that provably returns an accurate numerical upper bound on the Lipschitz constant, but due to the heavy computations of SDP solvers, it is not scalable.  The local version of LipSDP is developed in \cite{hashemi2021certifying}.
Recurjac \cite{zhang2019recurjac} recursively computes the local Lipschitz constant of networks with differentiable activation functions by bounding each element of the jacobian of the neural network. 

The Lipschitz constant has also been used for robustness verification and training of neural networks \cite{weng2018towards, weng2018evaluating, zhang2019recurjac, fazlyab2019efficient, singla2020second, fazlyab2024certified}. 
\cite{singla2020second} utilizes bounds on the Hessian of neural networks with differentiable activation functions to provide a robustness certificate. Most recently, \LipMethodName was proposed in \cite{fazlyab2024certified} based on loop transformation to develop a scalable method for bounding the Lipschitz constant.

\subsection{Preliminaries and Notation}
    We denote the $n$-dimensional real numbers as $\mathbb{R}^n$. 
    The sequence of natural numbers from $1$ to $n$ is denoted as $[n]$. 
    $\mathrm{1}_{n} \in \mathbb{R}^n$ denotes the column vector where all elements are equal to $1$.
    For a matrix $A \in \mathbb{R}^{n\times m}$, $A^\top_{i} \in \mathbb{R}^{m}$ is the $i$-th row  of $A$ and $A_{i, j}$ is the element on the $i$-th row and the $j$-th columns of $A$.
    For vectors  $x, y \in \mathbb{R}^n$, $x \leq y$ denotes element-wise inequality for $x_i \leq y_i$ for all $n$ elements of the vectors.
    For a symmetric matrix $A \in \mathbb{S}^n$, we define $A \preceq 0$ ($A \prec 0$) to show that $A$ is NSD or negative semi-definite (ND or negative definite). 
    For a vector $x \in \mathbb{R}^n$, $\mathrm{diag}(x)$ denotes a diagonal matrix with $x_i$ on the $i$-th element of the diagonal. $\lambda_{\max} (A)$ and $\lambda_{\min} (A)$ denote the maximum and minimum eigenvalue of $A$, respectively. The range of matrix $A$ is shown as $ \mathcal{R}(A)$ and the pseudo-inverse of $A$ is $A^\dagger$.
    We use $A \odot B$ to denote the Hadamard product of $A, B \in \mathbb{R}^{n\times m}$ where its $j$-th element of the $i$-th row is defined as $(A \odot B)_{i, j} = A_{i, j} B_{i, j}$. 
    For a given hyper-rectangle $\mathcal{X} = \lbrack \ell, u \rbrack := \{x \mid \ell \leq x\leq u\}$, we define $\mathrm{diam}(\mathcal{X}) = \|u - \ell\|_p$.
    A $L$-Lipschitz function $f \colon \mathbb{R}^{n_0} \rightarrow \mathbb{R}$ in $\ell_p$ norm satisfies
    \begin{align}
        |f(x) - f(y)| \leq L \|x - y\|_p \qquad \forall x, y \in \mathbb{R}^{n_0}. \notag
    \end{align}
    An immediate implication of Lipschitz continuity is that the norm of the gradient of the function is upper bounded by the Lipschitz constant
    $
        \|\nabla f(x)\|_{p^*} \leq L \ \forall x \in \mathbb{R}^{n_0}, \notag
    $
    where $p,p^*$ are H\"{o}lder conjugates, i.e.,  $\frac{1}{p}+\frac{1}{p^*}=1$. 
    We will drop the subscript $p$ when the context is clear and show the dual norm of $\|\cdot\|$ by subscript $\|\cdot\|_*$. 
    Furthermore, the following inequality between the $p$ and the $q$ norm of a vector $z$ holds.
    \begin{align}\label{eq:normIneq}
        n_0^{\min (0, \frac{1}{p} - \frac{1}{q})} \|z\|_q 
        \leq \|z\|_p \leq
        n_0^{\max (0, \frac{1}{p} - \frac{1}{q})} \|z\|_q
    \end{align}
    For a vector $x \in \mathbb{R}^{n_0}$, $\mathrm{sign}(x)$ denotes the element-wise sign function. 
    The operator norm of a matrix $A$ is denoted as
    $
    \|A\|_{p\rightarrow q} = \sup_{\|x\|_p \leq 1} \|Ax\|_q
    $, and $|A|$ denotes the element-wise absolute value of $A$.
    Furthermore, for matrix $A$ we have $\|A\| = \|A^\top\|_*$.


\section{Problem Statement}\label{sec:ProbStat}
%
%
Given a continuous function $f \colon \mathbb{R}^{n_x} \rightarrow \mathbb{R}^{n_f}$ and a compact non-empty set $\mathcal{X} \subset \mathbb{R}^{n_x}$, the task of reachability analysis amounts to computing the image of $\mathcal{X}$ under $f$, 
\begin{align}
    \mathcal{Y} = f(\mathcal{X}):=\{f(x) \mid x \in \mathcal{X}\}.  \notag
\end{align}
Since finding the exact reachable set is computationally prohibitive, our goal is to typically find an accurate yet efficient \emph{over-approximation} of the image set. 
In this paper, we assume that $f$ is a neural network with sufficiently smooth activation functions such as $\tanh$, sigmoid, and softplus.


\smallskip \noindent
\textbf{Reachable set representation:} 
Different set representations can be used to find an over-approximation of the reachable set. 
In this work, we focus on polyhedral reachable set representation. In other words, we formulate the reachability task in the form of the following non-convex optimization problem,
\begin{align}\label{eq:boundProblem}
    J^*_{\mathcal{X}} = \sup_{x \in \mathcal{X}} \{J(x)=c^\top f(x)\} \qquad  c \in \mathcal{C},
\end{align}
where $\mathcal{C}$ is a set of direction vectors $c \in \mathbb{R}^{n_x}$ that define the corresponding half spaces. This optimization problem computes the support function of the reachable set $f(\mathcal{X})$ at $c \in \mathcal{C}$.  
By solving \eqref{eq:boundProblem} for all directions $c \in \mathcal{C}$, one can obtain a polyhedral over-approximation of the reachable set.

\smallskip \noindent
\textbf{Input set:}
In this paper we first assume that $\mathcal{X}$ is a norm ball of the form $\mathcal{X} =\{x_c + \delta \mid \|\delta\|_p \leq \varepsilon \}$ where $x_c, \delta \in \mathbb{R}^{n_x}, \varepsilon \in \mathbb{R}_{+}$. We will later show how to generalize the proposed method to handle input sets of the form
%
%
%
$\mathcal{X} = \{x_c+Gz \mid \|z\|_p \leq 1\} \notag$, where $z \in \mathbb{R}^m$ is a latent vector, and $G \in \mathbb{R}^{n_x\times m}$ is the matrix of generators with $m \geq n_x$. In particular, for $p=2$ or $p=\infty$, this set represents ellipsoids and zonotopes, respectively.

\smallskip \noindent
\textbf{Neural network model:}
We assume that $f \colon \mathbb{R}^{n_0} \rightarrow \mathbb{R}^{n_L}$ ($n_0 = n_x, n_L = n_f$) is an $L$-layer fully-connected neural network of the following form,
    \begin{align}\label{eq:NNdef}
    \begin{split}
        &\begin{cases}
            z^{(l)}(x) = W^{(l)} a^{(l-1)}(x) + b^{(l)} \\
            a^{(l)}(x) = \sigma^{(l)}(z^{(l)}(x))
        \end{cases} l=1, \cdots, L-1 \\
        &f(x) = z^{(L)}(x) = W^{(L)} a^{(L-1)}(x) + b^{(L)},
    \end{split}
    \end{align}
$a^{(0)}(x) = x$ is the input, $\sigma^{(l)} \colon \mathbb{R}^{n_{l}} \to \mathbb{R}^{n_{l}}$ is the activation layer of the form $\sigma^{(l)}(z) = (\sigma^{(l)}_{1}(z_1),\cdots,\sigma^{(l)}_{n_1}(z_{n_{l}}))$ with $\sigma^{(l)}_i \colon \mathbb{R} \to \mathbb{R}$ being a smooth activation function, $n_l$ is the number of activation units of the $l$-th layer, and $W^{(l)}, b^{(l)}$ are weights and biases of that layer, respectively. 
Whenever the dependencies on $x$ are evident, we drop $x$ for brevity, e.g.,  
$a^{(l)}$ and $z^{(l)}$. We make the following assumption about the activation functions.
\begin{assumption}[Smooth Activation Functions]\label{asmp:activations}
    Each activation function $\sigma^{(l)}_i \colon \mathbb R \to \mathbb R$ is twice continuously differentiable and slope-restricted for some $0 \leq \alpha^{(l)}_i \leq \beta^{(l)}_i < \infty$, i.e.,
    \begin{align}\label{eq:activSlope}
        \alpha^{(l)}_i \leq \frac{\sigma^{(l)}_i(x) - \sigma^{(l)}_i(y)}{x-y} \leq \beta^{(l)}_i \qquad \forall x, y \in \mathbb{R}.
    \end{align}
    Furthermore, the derivative $\sigma'^{(l)}_i \colon \mathbb R \to \mathbb R$ is slope-restricted for some $ -\infty < \alpha'^{(l)}_i \leq \beta'^{(l)}_i <\infty$, i.e.,
    \begin{align}\label{eq:activCurv}
        \alpha'^{(l)}_i \leq \frac{\sigma'^{(l)}_i(x) - \sigma'^{(l)}_i(y)}{x-y}  \leq \beta'^{(l)}_i \qquad \forall x, y \in \mathbb{R}. 
    \end{align}
\end{assumption}
\medskip
Inequality \eqref{eq:activSlope} implies that the activation function $\sigma^{(l)}_i$ is Lipschitz continuous with constant $\beta^{(l)}_i$  and is (strongly) monotone with constant $\alpha^{(l)}_i$. 
These assumptions are not restrictive, as most of the commonly-used activation functions satisfy them (e.g. $\tanh$, sigmoid, etc.).
Note that Assumption \ref{asmp:activations} implies the bounds $|\sigma'^{(l)}_i(x)| \leq \max (\alpha^{(l)}_i, \beta^{(l)}_i) := g_i^{(l)}$ and $|\sigma''^{(l)}_i(x)| \leq \max (|\alpha'^{(l)}_i|, |\beta'^{(l)}_i|) := h_i^{(l)}$ for all $x \in \mathbb{R}$. As an example, $\tanh$ satisfies the aforementioned properties with $\alpha = 0$, $\beta =1$, $\alpha'=-\beta'=-\frac{4}{3\sqrt{3}}$. 

\smallskip
Assumption \ref{asmp:activations} is global (i.e., holds for $x \in \mathbb{R}$), but it can be localized to obtain relatively more accurate slope bounds. In our developments, we will take advantage of these local bounds, considering that the neural network operates on the set $\mathcal{X} \subset \mathbb{R}^{n_x}$ rather than the entire $\mathbb{R}^{n_x}$.
%
%
Assuming that this localization has been done as a pre-processing step, we define $\alpha^{(l)}, \beta^{(l)}$ to be the vector of maximum and minimum slope of the activations of layer $l$, respectively.

%

\section{Sound Taylor Approximation of Neural Networks}\label{sec:TaylorApp}
In this section, we develop first-order local Taylor approximations of the scalar-valued function $J \colon \mathbb{R}^{n_x} \to \mathbb{R}$ and use these approximations to provide lower and upper bounds on the optimal value in \eqref{eq:boundProblem} 
over the norm ball $\mathcal{X} =\{x_c + \delta \mid \|\delta\|_p \leq \varepsilon \}$, 
\begin{align} \label{eq: bounds on J}
    \underline{J}_\mathcal{X}
    \leq \sup_{x \in \mathcal{X}}J(x) \leq
    \overline{J}_\mathcal{X}. 
\end{align}
In $\S$\ref{sec:PropMeth} we will utilize these bounds within a branch-and-bound framework to compute the optimal value $J^*_{\mathcal{X}}$ within an arbitrary accuracy. We start with a zeroth-order approximation, in which $J$ is assumed to be Lipschitz continuous.

\subsection{Zeroth-order approximation}
Suppose $J$ is Lipschitz continuous over $\mathcal{X}$ with Lipschitz constant $L>0$, i.e., $|J(x)-J(y)| \leq L \|x-y\|_p$ for all  $x, y \in \mathcal{X}$.
%
This implies
\begin{align} \label{eq: zeroth order expansion}
   J(y) \!-\! L\|x\!-\!y\|_p 
   \leq J(x) \leq J(y) \!+\! L\|x\!-\!y\|_p \ \forall x, y \in \mathcal{X}.
\end{align}
Setting $y=x_c$ and taking the supremum of both sides, the desired bounds can be computed as
\begin{equation} \label{eq:zeroth-orderbounds}
\begin{aligned}
    \underline{J}^0_\mathcal{X}(x_c) &= \sup_{x \in \mathcal{X}} \{J(x_c) - L \|x-x_c\|_p\} = J(x_c) \\
    \overline{J}^0_\mathcal{X}(x_c) &= \sup_{x \in \mathcal{X}} \{J(x_c) + L \|x-x_c\|_p\} = J(x_c) + L \varepsilon.
\end{aligned}
\end{equation}
%
%
Since we are only assuming Lipschitz continuity, these bounds apply to all neural networks, including piece-wise linear networks (e.g., $\mathrm{ReLU}$ networks), which are not differentiable everywhere. 

For a continuously differentiable $J$, $L$ is an upper bound on the dual norm of its gradient, $\|\nabla J(x)\|_{p^*}\leq L \ \forall x \in \mathcal{X}$. Thus, in \eqref{eq:zeroth-orderbounds} no local gradient information is used other than a bound on its norm.  Next, we show that we can improve the bounds by explicitly incorporating the local gradient information and the Lipschitz constant of $\nabla J$.

\subsection{First-order approximation}


Assume that $J$ is twice differentiable with continuous first derivative. The first-order Taylor expansion of $J$ around an arbitrary $y \in \mathcal{X}$, with the Lagrange form of the remainder, is 
\begin{align}
    J(x) = J(y) \!+\! \nabla J(y)^\top \delta \! + \! 
    \frac{1}{2} \delta^\top \nabla^2 J(y+\theta  \delta) \delta \quad \forall x, y \in \mathcal{X}, \notag
\end{align}
where $\theta \in (0,1)$ and $\delta = x-y$. 
Assume that there exist symmetric matrices $N, M \in \mathbb{S}^{n_x \times n_x}$ such that 
\begin{align} \label{eq: bounds on Hessian}
N \preceq \nabla^2 J(x) \preceq M \quad \forall x \in \mathcal{X}. 
\end{align}
Using these bounds in the preceding Taylor expansion, we obtain the following quadratic bounds on $J(x)$,
\begin{align}\label{eq:MeanValueTheorem}
\begin{split}
    &J(y) \! + \! \nabla J(y)^\top \delta + \frac{1}{2} \delta^\top N \delta
    \leq J(x) \quad \forall x, y \in \mathcal{X} \\ 
    &J(y) \! + \! \nabla J(y)^\top \delta + \frac{1}{2} \delta^\top M \delta
    \geq J(x) \quad \forall x, y \in \mathcal{X}.
\end{split}
\end{align}
Compared to \eqref{eq: zeroth order expansion},  these bounds are quadratic and they preserve the gradient information at $y$.
By taking the supremum of \eqref{eq:MeanValueTheorem} with respect to $x$, given a fixed $y \in \mathcal{X}$,  
we obtain
\begin{align}
    \underline{J}^1_\mathcal{X}(y)
    \leq \sup_{x \in \mathcal{X}} J(x) 
    \leq \overline{J}^1_\mathcal{X}(y),
\end{align}
where the bounds are given by the following non-convex quadratic programs. 
\begin{subequations}\label{eq:SecondOrderLowerBound}
\begin{align}
    \underline{J}^1_\mathcal{X}(y):= J(y) &+\sup_{x \in \mathcal{X}} \Bigl( \nabla J(y)^\top \delta
        + \frac{1}{2} \delta^\top N \delta\Bigr)\label{eq:subeq1} \\
    \overline{J}^1_\mathcal{X}(y):= J(y) &+\sup_{x \in \mathcal{X}} \Bigl( \nabla J(y)^\top \delta 
        + \frac{1}{2} \delta^\top M \delta\Bigr)\label{eq:subeq2}
\end{align}
\end{subequations}
As we will show in $\S$\ref{sec:PropMeth}, the gradient-informed bounds in \eqref{eq:SecondOrderLowerBound} will always be better than \eqref{eq: zeroth order expansion} for sufficiently small $\mathcal{X}$. In $\S$\ref{sec:LocalLT}, we will propose a scalable algorithm to compute $L$ analytically, which will be tailored in $\S$\ref{sec:hess} to compute analytical expressions for $M$, and $N$. 



\section{Local Lipschitz Approximation for general neural networks} \label{sec:LocalLT}
%

One method to achieve accurate numerical bounds on the Lipschitz constant of neural networks is LipSDP, which abstracts activation functions with quadratic constraints and frames the Lipschitz constant estimation as a semidefinite program (SDP). However, LipSDP is only feasible for networks of moderate size. To gain scalability at the expense of less accurate bounds, we adapt \LipMethodName \cite{fazlyab2024certified}, which provides analytical and thus scalable bounds on the global Lipschitz constant. Specifically, we develop a variant of \LipMethodName that localizes the bounds to the region of interest, the set \(\mathcal{X}\) in \eqref{eq:boundProblem}. We first begin with two-layer neural networks of the form \eqref{eq:NNdef}, and then state the general result for the multi-layer scenario. 




\subsection{Two-layer neural networks}
%
%

Consider a two-layer network $ f(x) = W^{(2)} \sigma(W^{(1)}x)$, where 
$\sigma(z) = (\sigma_{1}(z_1),\cdots,\sigma_{n_1}(z_{n_1}))$ is the activation layer.
Suppose  $\sigma_{i} \in \mathrm{slope}(\alpha_{i}, \beta_{i})$ and define $\alpha=(\alpha_{1},\cdots,\alpha_{n_1})$ and $\beta=(\beta_{1},\cdots,\beta_{n_1})$. We define the following function parameterized by the diagonal matrix $D = \mathrm{diag}(d)$, where  $d \in \mathbb{R}_{+}^{n_1}$ is an element-wise positive vector,
\begin{align}
    \psi(z;d) = \sigma(z) - Dz. \notag
\end{align}
The matrix $D$ essentially performs a coordinate-dependent loop transformation on each nonlinearity. The loop-transformed neural network can be written as (see Figure \ref{fig:LT} for an illustration)
\begin{align}\label{eq:liplt2layer}
     f(x) = W^{(2)} ( \psi(W^{(1)}x;d) + DW^{(1)}x). 
\end{align}
Our main idea is to compute a parameterized Lipschitz constant $L(d)$ of $f$ and optimize it over $d$ to find the best upper bound. 
%
%
%
%
To this end, we start with the following lemma. 
\begin{lemma}\label{lemma:singlelaterLT} 
    The loop-transformed two-layer neural network $f$ given in \eqref{eq:liplt2layer} is Lipschitz continuous with constant
    \begin{align} \label{eq: LipLT parameterized bound two layer}
    L(d)\!=\!\|W^{(2)}\|_p\|\mathrm{diag}(\max(|\beta \!-\! d|, &|d \!-\! \alpha|))W^{(1)}\|_p + \notag \\
             &\|W^{(2)}DW^{(1)}\|_p. 
    \end{align}
\end{lemma}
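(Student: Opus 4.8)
The plan is to prove the Lipschitz bound directly from the definition by splitting $f$ into its loop-transformed nonlinear part and its linear part, and bounding each separately. The crucial observation is that the loop transformation converts the slope restriction of $\sigma$ into a \emph{per-coordinate Lipschitz constant} for $\psi$, and that these coordinate-wise constants can be folded exactly into a single operator norm $\|\mathrm{diag}(\max(|\beta-d|,|d-\alpha|))W^{(1)}\|_p$ rather than crudely pulled out as a scalar maximum.

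First I would establish the key one-dimensional fact: for each $i$, the transformed scalar map $\psi_i(\cdot;d_i) = \sigma_i(\cdot) - d_i(\cdot)$ is slope-restricted in $[\alpha_i - d_i,\, \beta_i - d_i]$, which follows immediately from subtracting $d_i$ from \eqref{eq:activSlope}. Consequently $\psi_i$ is Lipschitz with constant $\max(|\beta_i - d_i|,\, |d_i - \alpha_i|)$, i.e. the largest absolute slope. Writing $\Lambda := \mathrm{diag}(\max(|\beta-d|,|d-\alpha|))$, this gives the element-wise bound $|\psi(z) - \psi(w)| \leq \Lambda\,|z-w|$ for all $z,w$, where $|\cdot|$ is taken coordinate-wise.

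Next I would take two arbitrary points $u,v$, set $z = W^{(1)}u$ and $w = W^{(1)}v$, and use \eqref{eq:liplt2layer} with the triangle inequality to write
\begin{align}
    \|f(u)-f(v)\|_p \leq \|W^{(2)}(\psi(z)-\psi(w))\|_p + \|W^{(2)}DW^{(1)}(u-v)\|_p. \notag
\end{align}
The second term is bounded by $\|W^{(2)}DW^{(1)}\|_p\,\|u-v\|_p$ by definition of the operator norm, producing the additive term in $L(d)$. For the first term, submultiplicativity gives $\|W^{(2)}(\psi(z)-\psi(w))\|_p \leq \|W^{(2)}\|_p\,\|\psi(z)-\psi(w)\|_p$, and then I invoke monotonicity of the $p$-norm under element-wise absolute value together with the bound from the previous step:
\begin{align}
    \|\psi(z)-\psi(w)\|_p \leq \|\Lambda\,|z-w|\,\|_p = \big\||\Lambda W^{(1)}(u-v)|\big\|_p = \|\Lambda W^{(1)}(u-v)\|_p, \notag
\end{align}
where the first equality uses that $\Lambda$ is diagonal with nonnegative entries, so $\Lambda|z-w| = |\Lambda W^{(1)}(u-v)|$ exactly. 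Bounding this by $\|\Lambda W^{(1)}\|_p\,\|u-v\|_p$ yields the first term of $L(d)$, and summing the two contributions closes the proof.

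The main obstacle, and the only nonroutine point, is the \emph{diagonal absorption} step: one must resist bounding $\|\psi(z)-\psi(w)\|_p$ by $(\max_i \Lambda_{ii})\|z-w\|_p$, which would be looser. The correct move exploits that $\Lambda$ is diagonal and nonnegative, so pre-multiplying $W^{(1)}(u-v)$ by $\Lambda$ coincides exactly with scaling each coordinate's absolute value, letting the per-neuron Lipschitz constants be absorbed into the operator norm $\|\Lambda W^{(1)}\|_p$. Everything else—slope arithmetic, the triangle inequality, and submultiplicativity of the operator norm—is standard.
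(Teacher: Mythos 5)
Your proposal is correct and follows essentially the same route as the paper's proof: split $f$ into the nonlinear term $W^{(2)}\psi(W^{(1)}x;d)$ and the linear term $W^{(2)}DW^{(1)}x$, bound the latter by its operator norm, and for the former use the slope restriction $\psi_i \in \mathrm{slope}(\alpha_i - d_i, \beta_i - d_i)$ to absorb the per-coordinate constants into $\|\mathrm{diag}(\max(|\beta-d|,|d-\alpha|))W^{(1)}\|_p$. Your "diagonal absorption" via norm monotonicity under element-wise absolute value is exactly what the paper does by writing out the coordinate-wise sum of $p$-th powers, so the two arguments coincide.
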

See Appendix \ref{subsec:proofs} for the proof.

\begin{figure}
    \centering
     \includegraphics[width= 0.4\textwidth]{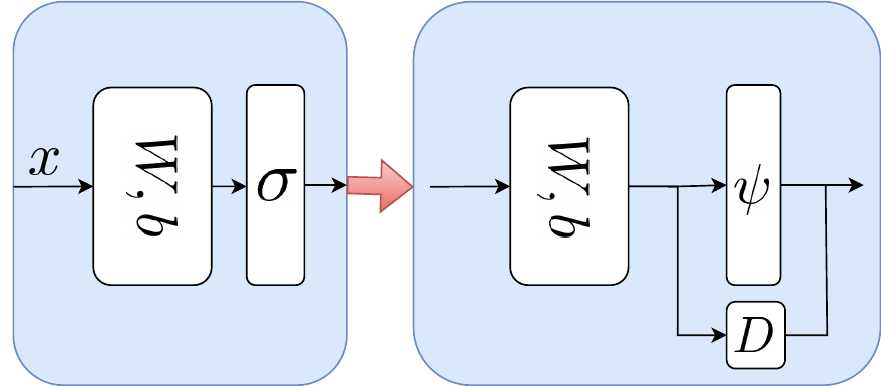}
    \caption{Loop Transformation on an activation layer.}
    \label{fig:LT}
\end{figure}
%
\medskip
Utilizing \Cref{lemma:singlelaterLT}, the optimal loop transformation parameter can be obtained by solving the following optimization problem 
\begin{align}\label{eq:lipLTOpt}
    d^\star \in \mathrm{argmin}_{d \in \mathbb{R}_{+}^{n_1}} \ L(d). 
\end{align}
%
%
While $d^\star$ can be found numerically, we propose an analytical choice of $d$ that provably improves $L(0)$, the local naive bound (without any loop transformation):
\[L(0) = \|W^{(2)}\| \|\mathrm{diag}(\beta) W^{(1)}\|. \notag
\]
\begin{theorem}
    Consider $L(d)$ defined in \eqref{eq: LipLT parameterized bound two layer}. Then we have $L(\beta/2) \leq L(0)$.
\end{theorem}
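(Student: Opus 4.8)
The plan is to directly compare the two summands that appear in $L(\beta/2)$ against the single product term in $L(0)$. Recall that
\[
L(d) = \|W^{(2)}\|\,\bigl\|\mathrm{diag}(\max(|\beta-d|,|d-\alpha|))W^{(1)}\bigr\| + \|W^{(2)}DW^{(1)}\|,
\]
so substituting $d=\beta/2$ gives $D = \mathrm{diag}(\beta)/2$ and the diagonal vector inside the first term becomes $\max(|\beta-\beta/2|,|\beta/2-\alpha|) = \max(\beta/2,\,|\beta/2-\alpha|)$. The key observation I would establish first is that, since $0 \le \alpha \le \beta$, we have $0 \le \beta/2 - \alpha \le \beta/2$ when $\alpha \le \beta/2$ and $-\beta/2 \le \beta/2-\alpha \le 0$ when $\alpha \ge \beta/2$; in either case $|\beta/2 - \alpha| \le \beta/2$. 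Hence the elementwise maximum simplifies cleanly to $\max(\beta/2,|\beta/2-\alpha|) = \beta/2$. This collapses the first term of $L(\beta/2)$ to $\|W^{(2)}\|\,\|\mathrm{diag}(\beta/2)W^{(1)}\| = \tfrac12\|W^{(2)}\|\,\|\mathrm{diag}(\beta)W^{(1)}\| = \tfrac12 L(0)$.

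Next I would handle the second term $\|W^{(2)}DW^{(1)}\|$ with $D=\mathrm{diag}(\beta)/2$. The natural move is submultiplicativity of the operator norm: $\|W^{(2)}DW^{(1)}\| = \tfrac12\|W^{(2)}\mathrm{diag}(\beta)W^{(1)}\| \le \tfrac12\|W^{(2)}\|\,\|\mathrm{diag}(\beta)W^{(1)}\| = \tfrac12 L(0)$. Here I use $\|AB\| \le \|A\|\,\|B\|$ twice (grouping as $W^{(2)}$ times $\mathrm{diag}(\beta)W^{(1)}$), which holds for any operator norm induced by a vector norm. Adding the two bounds yields $L(\beta/2) \le \tfrac12 L(0) + \tfrac12 L(0) = L(0)$, which is exactly the claim.

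The only subtle point I want to flag is the elementwise-maximum simplification, so I would make sure the monotonicity/slope constraint $0 \le \alpha_i \le \beta_i$ is invoked explicitly for each coordinate $i$ — the factorization $\mathrm{diag}(\max(|\beta/2|,|\beta/2-\alpha|)) = \mathrm{diag}(\beta/2)$ is what makes the first term exactly $\tfrac12 L(0)$ rather than merely bounding it. I expect the main (and really only) obstacle to be verifying that the elementwise max indeed equals $\beta/2$ under the assumption $\alpha \ge 0$; once that is pinned down, the rest is a routine two-line application of the triangle inequality for norms and operator-norm submultiplicativity. Everything else is bookkeeping, and no additional results beyond Lemma~\ref{lemma:singlelaterLT} and the nonnegativity of $\alpha$ are needed.
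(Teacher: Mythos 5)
Your proof is correct and follows essentially the same route as the paper: substitute $d=\beta/2$, use $0\le\alpha\le\beta$ to collapse $\mathrm{diag}(\max(|\beta-\beta/2|,|\beta/2-\alpha|))$ to $\mathrm{diag}(\beta/2)$ so the first term equals $\tfrac12 L(0)$, and bound the second term by $\tfrac12 L(0)$ via operator-norm submultiplicativity. The only difference is that you spell out the elementwise-max verification and the submultiplicativity step explicitly, which the paper compresses into one displayed inequality.
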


    \begin{proof}
    By substituting $d=\beta/2$ in \eqref{eq: LipLT parameterized bound two layer}, we can write
    \begin{align}
        L(\frac{\beta}{2}) &= \|W^{(2)}\|\|(\mathrm{diag}(\frac{\beta}{2})W^{(1)}\| + \|W^{(2)}\mathrm{diag}(\frac{\beta}{2})W^{(1)}\| \notag \\
        &\leq \|W^{(2)}\| \|\mathrm{diag}(\beta) W^{(1)}\| = L(0). \notag
    \end{align}
    where in the first line, we used the fact that $\mathrm{diag}(\max(|\beta - \beta/2|, |\beta/2 - \alpha|) = \mathrm{diag}(\beta/2)$ when $\alpha \geq 0$.
    \end{proof}

In the rest of this section, we generalize this result to multi-layer networks, but we will first make a comparison with LipLT. 

\smallskip 
\subsubsection*{Comparison with global LipLT \cite{fazlyab2024certified}}\label{rmk:compareLipLT}
Global LipLT assumes that all activations are slope-restricted in the same interval, i.e., $\sigma_i \in \mathrm{slope}(\tilde{\alpha}, \tilde{\beta}), i=1,\cdots,n_1$, and the same loop transformation is applied to all activations, i.e., $d=\tilde d \mathrm{1}_{n_1}$. 
Under this assumption, \cite{fazlyab2024certified} proves that the optimal loop transformation is $d^\star = \frac{\tilde{\alpha} + \tilde{\beta}}{2} \mathrm{1}_{n_1}$. 
%
However, for the case of heterogeneous slope bounds considered in this paper, the choice $d = \frac{\alpha+\beta}{2}$ does not even necessarily improve the naive Lipschitz constant $L(0)$ as shown in the following example.
\begin{example}
Consider a two-layer neural network $f : \mathbb{R}^2 \to \mathbb{R}^2$ with $W^{(1)} = \begin{bmatrix} 1 & 2 \\ 1 & 2\end{bmatrix}$ and 
$W^{(2)} = \begin{bmatrix} 1 & 1 \\ 1 & 2\end{bmatrix}$. 
Let $\alpha = \begin{bmatrix} 0.2 \\ 0.6 \end{bmatrix}$ and $\beta = \begin{bmatrix} 0.8 \\ 0.7 \end{bmatrix}$. 
For this network, bounds on the Lipschitz constant using the naive approach $L(0)$, LipLT with $d = \frac{\alpha + \beta}{2}$,  $L(\frac{\alpha + \beta}{2})$, our choice $L(\frac{\beta}{2})$, and the optimal loop transformation $d^\star$ can be computed as
    \begin{align}
        \begin{cases}
            L(0) = \|W^{(2)}\|_2\|\beta W^{(1)}\|_2 = 6.22 \notag \\
            L(\frac{\alpha + \beta}{2}) = 0.5(\|W^{(2)}(\beta + \alpha) W^{(1)}\|_2 + \notag \\ 
                \qquad \qquad \qquad \quad\|W^{(2)}\|_2\|(\beta - \alpha) W^{(1)}\|_2) = 6.55\notag \\
            L(\frac{\beta}{2}) = 0.5(\|W^{(2)}\beta W^{(1)}\|_2 + \|W^{(2)}\|_2\|\beta W^{(1)}\|_2) = 6.08 \notag\\
            L(d^\star) = 5.96 \notag
        \end{cases}
    \end{align}
    Note that $d^\star$ was computed using CVXPY \cite{cvxpy} under the assumption $0 \leq d_i \leq \frac{\alpha_i + \beta_i}{2}$ by  which \eqref{eq:lipLTOpt} is a convex optimization problem.
    
\end{example}


\subsection{Multi-layer neural networks}
For multi-layer neural networks,  we apply a layer-dependent loop transformation $d^{(l)}$ for each layer $l$ of \eqref{eq:NNdef}, and we drop the biases without loss of generality. This results in the following equivalent representation,
  \begin{align}\label{eq:nnDefLT}
        &\begin{cases}
            z^{(l)}(x)\! =\! W^{(l)} a^{(l-1)}(x) \\
            y^{(l)}(x)\! =\! D'^{(l)} z^{(l)}(x)\\
            a^{(l)}(x)\! =\! \psi^{(l)}(z^{(l)}(x); d^{(l)}) \! + \! D^{(l)} z^{(l)}(x)
        \end{cases} \hspace{-2mm} l=1, \cdots, L-1 \notag \\ 
        &\begin{cases}
            z^{(L)}(x) = W^{(L)} a^{(L-1)}(x),\\
            f(x) = y^{(L)}(x) = z^{(L)}(x)  
        \end{cases}
  \end{align}
where $a^{(0)}=x$ is the input, $\psi^{(l)}(x;d^{(l)}) = \sigma^{(l)}(x) - D^{(l)}x$ is the loop transformed activation of layer $\l$, $D^{(l)} = \mathrm{diag}(d^{(l)})$ is the corresponding loop transformation matrix, $D'^{(l)} = \mathrm{diag}(\beta^{(l)} - d^{(l)})$, and 
$
\mathcal{D}^{(l)} = \{D^{(1)}, \cdots, D^{(l)}\}.
$

In \eqref{eq:nnDefLT}, we have defined an auxiliary sequence $y^{(l)}$ to facilitate the derivation of the main result. 
In the following theorem, we obtain a parametric Lipschitz constant for $f$. 



\begin{theorem}\label{thm:LocalLipLT}
    Consider the sequences in \eqref{eq:nnDefLT}. 
    Suppose $\sigma^{(l)}_i \in \mathrm{slope}(\alpha^{(l)}_i, \beta^{(l)}_i)$, and $0 \leq d^{(l)}_i \leq \frac{\alpha^{(l)}_i + \beta^{(l)}_i}{2}$. Let $m^{(1)}(\mathcal{D}^{(1)}) = \|D'^{(1)}W^{(1)}\|$, and define
    $m^{(l)}(\mathcal{D}^{(l)})$ as
     \begin{align}\label{eq:LoopTransLip}
        &m^{(l)}({\mathcal{D}}^{(l)}) = \|D'^{(l)}W^{(l)} \prod_{i=1}^{l-1} D^{(i)} W^{(i)}\| + \notag\\
        &\quad \sum_{j=1}^{l-1}
        \|D'^{(l)}W^{(l)} \! \prod_{i=j+1}^{l-1} \! D^{(i)} W^{(i)}\| \times 
        m^{(j)}(\mathcal{D}^{(j)}),
    \end{align} 
    for $\ell=2,\cdots,L$. Then $m^{(l)}(\mathcal{D}^{(l)})$ is a Lipschitz constant for the map $x \mapsto y^{(l)}(x)$. 
    In particular, with $D'^{(L)} = I$, $m^{(L)}(\mathcal{D}^{(L)})$ is a Lipschitz constant for $x \mapsto f(x)$. 
\end{theorem}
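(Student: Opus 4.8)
The plan is to prove the statement by induction on the layer index $l$, establishing at each step that $m^{(l)}(\mathcal{D}^{(l)})$ is a Lipschitz constant for the map $x \mapsto y^{(l)}(x)$. Fixing two inputs $x, x' \in \mathcal{X}$, I write $\Delta a^{(l)} = a^{(l)}(x) - a^{(l)}(x')$ and analogously $\Delta z^{(l)}, \Delta y^{(l)}, \Delta \psi^{(l)}$, and aim to bound $\|\Delta y^{(l)}\|$ by $m^{(l)}(\mathcal{D}^{(l)}) \|\Delta x\|$.

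The first and most important ingredient is a bound on the loop-transformed activation. Since $\sigma^{(l)}_i \in \mathrm{slope}(\alpha^{(l)}_i, \beta^{(l)}_i)$, the scalar map $\psi^{(l)}_i(z) = \sigma^{(l)}_i(z) - d^{(l)}_i z$ is slope-restricted in $[\alpha^{(l)}_i - d^{(l)}_i, \beta^{(l)}_i - d^{(l)}_i]$ and hence Lipschitz with constant $\max(|\beta^{(l)}_i - d^{(l)}_i|, |d^{(l)}_i - \alpha^{(l)}_i|)$, as in \Cref{lemma:singlelaterLT}. First I would check that the constraint $0 \leq d^{(l)}_i \leq (\alpha^{(l)}_i + \beta^{(l)}_i)/2$ forces this maximum to equal $\beta^{(l)}_i - d^{(l)}_i \geq 0$, i.e.\ the $i$-th diagonal entry of $D'^{(l)} = \mathrm{diag}(\beta^{(l)} - d^{(l)})$. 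Componentwise this gives $|\Delta \psi^{(l)}| \leq D'^{(l)} |\Delta z^{(l)}|$ with both sides nonnegative; since $D'^{(l)}$ is diagonal with nonnegative entries and the $\ell_p$ norm is monotone with respect to the componentwise order, this yields the crucial inequality $\|\Delta \psi^{(l)}\| \leq \|D'^{(l)} \Delta z^{(l)}\| = \|\Delta y^{(l)}\|$. This is the heart of the argument: the auxiliary sequence $y^{(l)}$ simultaneously is the quantity to be bounded and controls the nonlinear perturbation $\Delta \psi^{(l)}$.

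Next I would unroll the recursion $\Delta a^{(l)} = \Delta \psi^{(l)} + D^{(l)} W^{(l)} \Delta a^{(l-1)}$, with base case $\Delta a^{(0)} = \Delta x$, into the telescoped form
\begin{align}
\Delta a^{(l-1)} = \sum_{j=1}^{l-1} \Bigl(\prod_{i=j+1}^{l-1} D^{(i)} W^{(i)}\Bigr) \Delta \psi^{(j)} + \Bigl(\prod_{i=1}^{l-1} D^{(i)} W^{(i)}\Bigr)\Delta x, \notag
\end{align}
where products are ordered by decreasing index and the empty product is the identity. Left-multiplying by $D'^{(l)} W^{(l)}$ produces $\Delta y^{(l)}$; applying the triangle inequality together with submultiplicativity of the operator norm, and then substituting $\|\Delta \psi^{(j)}\| \leq \|\Delta y^{(j)}\| \leq m^{(j)}(\mathcal{D}^{(j)})\|\Delta x\|$ (the first inequality from the activation bound above, the second from the induction hypothesis) reproduces exactly the recursive definition of $m^{(l)}(\mathcal{D}^{(l)})$ in \eqref{eq:LoopTransLip}. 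The base case $l = 1$ is immediate since $\Delta y^{(1)} = D'^{(1)} W^{(1)} \Delta x$. The final claim for $f$ then follows by taking $l = L$ with $D'^{(L)} = I$, for which $y^{(L)} = z^{(L)} = f$ and the sum runs over $j = 1, \dots, L-1$.

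The hard part will not be any single estimate but the bookkeeping: getting the ordering and empty-product conventions in the nested matrix products right, and using the two-sided role of $y^{(l)}$ consistently. Conceptually, the one step that must be handled carefully is the inequality $\|\Delta \psi^{(l)}\| \leq \|\Delta y^{(l)}\|$, which is exactly where the constraint $0 \leq d^{(l)}_i \leq (\alpha^{(l)}_i + \beta^{(l)}_i)/2$ and the nonnegativity of $D'^{(l)}$ enter; without them, both the identification of the activation's Lipschitz constant with $\beta^{(l)}_i - d^{(l)}_i$ and the monotone-norm step would fail.
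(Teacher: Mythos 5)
Your proposal is correct and follows essentially the same route as the paper's proof: unrolling the loop-transformed recursion into a sum of terms involving $\psi^{(j)}$ and $x$, using the constraint $0 \leq d^{(l)}_i \leq (\alpha^{(l)}_i+\beta^{(l)}_i)/2$ to identify the Lipschitz constant of $\psi^{(l)}$ with $D'^{(l)}$, and exploiting the key identity $\|\Delta\psi^{(l)}\| \leq \|D'^{(l)}\Delta z^{(l)}\| = \|\Delta y^{(l)}\| \leq m^{(l)}(\mathcal{D}^{(l)})\|\Delta x\|$ inductively term by term. The only difference is presentational: you make the induction explicit, whereas the paper leaves it implicit in its final sentence.
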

\begin{proof}
    Consider the loop transformed representation in \eqref{eq:nnDefLT}. 
    Then
    \begin{align}
        y^{(l)} = D'^{(l)}W^{(l)}a^{(l-1)} 
        = &D'^{(l)}W^{(l)}\psi^{(l-1)} \notag \\
        + &D'^{(l)}W^{(l)}D^{(l-1)} z^{(l-1)}. \notag
    \end{align}
    Recursively applying loop transformation to terms that do not contain $\psi$, we get
    \begin{equation}
        \begin{split}\label{eq:defY}
            y^{(l)} 
            &= D'^{(l)}W^{(l)}\psi^{(l-1)} \\
            &+  D'^{(l)}W^{(l)} D^{(l-1)} W^{(l-1)} \psi^{(l-2)} + \cdots \\
            &+ D'^{(l)}W^{(l)} \prod_{i=l - k}^{l-1} (D^{(i)}W^{(i)}) \psi^{(l - k -1)} + \cdots\\
            & + D'^{(l)}W^{(l)} \prod_{i=2}^{l-1} (D^{(i)}W^{(i)}) \psi^{(1)} \\
            &+ D'^{(l)}W^{(l)} \prod_{i=1}^{l-1} (D^{(i)} W^{(i)}) x.
        \end{split}
    \end{equation}
    Similar to the proof of Lemma \ref{lemma:singlelaterLT}, we know that $z \mapsto \psi^{(l)}(z)$ satisfies the following
    \begin{align}\label{eq:psiSlope}
         \|\psi^{(l)}(z) - \psi^{(l)}(\tilde{z})\| 
        \leq \|D'^{(l)}(z - \tilde{z})\|.
    \end{align}
    To see this, note that for $0 \leq d^{(l)}_i \leq \frac{\alpha^{(l)}_i + \beta^{(l)}_i}{2}$, we have 
    $$\mathrm{diag}(\max(|\beta^{(l)} - d^{(l)}| , |d^{(l)} - \alpha^{(l)}|))
    \!= \! \mathrm{diag}(\beta^{(l)} - d^{(l)}) \! = \! D'^{(l)}.$$
    Using \eqref{eq:psiSlope}, we can bound the Lipschitz of $x \mapsto \psi^{(l)}(z^{(l)}(x))$ as follows
    \begin{align}
        \|\psi^{(l)}(z^{(l)}(x)) - \psi^{(l)}(z^{(l)}(\tilde{x}))\| 
        \leq & \|D'^{(l)}(z^{(l)}(x)-z^{(l)}(\tilde{x}))\|  \notag \\
        =  \|y^{(l)}(x) - y^{(l)}(\tilde{x})\| 
        \leq & m^{(l)}(\mathcal{D}^{(l)}) \|x - \tilde{x}\|, \notag
    \end{align}
    where we have used the definition of $y^{(l)}$ from \eqref{eq:nnDefLT} in the second line.
    Then to obtain a Lipschitz constant for $y^{(l)}$ we bound the Lipschitz constant of every term on the right-hand side of \eqref{eq:defY}.
\end{proof}

\Cref{alg:LocalLipLT} presents the algorithm for computing the Lispchitz bounds in Theorem \ref{thm:LocalLipLT}. These bounds are for the Lipschitz constants of the subnetworks $x \mapsto y^{(l)}(x)$. 
In the following Corollary, we modify the algorithm to bound the Lipschitz constant of the subnetworks  $x \mapsto z^{(l)}(x)$, which will become useful in subsequent developments.
\begin{corollary}\label{cor:lipLTsubnet}
    Define the sequence of loop transformation matrices
    $
        \tilde{\mathcal{D}}^{(l)} = \{D^{(1)}, \cdots, D^{(l-1)}, \mathrm{diag}(\beta^{(l)})- I \}\notag
    $
    for $1 \leq \ell \leq L-1$. 
    Then $m^{(l)}(\tilde{\mathcal{D}}^{(l)})$ is a Lipschitz of $x \mapsto z^{(l)}(x)$ since $y^{(l)} = D'^{(l)}z^{(l)} = \mathrm{diag}(\beta^{(l)} - \beta^{(l)} + I ) z^{(l)}= z^{(l)}$.
\end{corollary}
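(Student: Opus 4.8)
The plan is to reduce the statement directly to Theorem \ref{thm:LocalLipLT} by engineering the final loop-transformation matrix so that the auxiliary readout $y^{(l)}$ collapses onto $z^{(l)}$. Theorem \ref{thm:LocalLipLT} already certifies that $m^{(l)}(\mathcal{D}^{(l)})$ is a Lipschitz constant for $x \mapsto y^{(l)}(x) = D'^{(l)} z^{(l)}(x)$ for an admissible sequence $\mathcal{D}^{(l)}$. Since $z^{(l)}$ and $y^{(l)}$ differ only by the constant linear factor $D'^{(l)} = \mathrm{diag}(\beta^{(l)} - d^{(l)})$, it suffices to choose $d^{(l)}$ so that $D'^{(l)} = I$, and the very same bound transfers verbatim to $z^{(l)}$.

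Concretely, I would set $d^{(l)} = \beta^{(l)} - \mathrm{1}_{n_l}$, i.e. $D^{(l)} = \mathrm{diag}(\beta^{(l)}) - I$, which is exactly the last entry of $\tilde{\mathcal{D}}^{(l)}$. Substituting into $D'^{(l)} = \mathrm{diag}(\beta^{(l)} - d^{(l)})$ gives $D'^{(l)} = \mathrm{diag}(\beta^{(l)} - \beta^{(l)} + \mathrm{1}_{n_l}) = I$, whence $y^{(l)} = D'^{(l)} z^{(l)} = z^{(l)}$. Applying Theorem \ref{thm:LocalLipLT} with the sequence $\tilde{\mathcal{D}}^{(l)}$ then yields that $m^{(l)}(\tilde{\mathcal{D}}^{(l)})$ bounds the Lipschitz constant of $x \mapsto y^{(l)}(x)$, which now coincides with $x \mapsto z^{(l)}(x)$, as claimed; the base case $l=1$ reduces to $m^{(1)}(\tilde{\mathcal{D}}^{(1)}) = \|W^{(1)}\|$, the Lipschitz constant of $z^{(1)} = W^{(1)}x$.

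The one point requiring care — and the only place where I would slow down — is verifying that the choice $D^{(l)} = \mathrm{diag}(\beta^{(l)}) - I$ remains legitimate even though it generally violates the box constraint $0 \le d^{(l)}_i \le \frac{\alpha^{(l)}_i + \beta^{(l)}_i}{2}$ imposed in Theorem \ref{thm:LocalLipLT}. Inspecting the expansion \eqref{eq:defY} of $y^{(l)}$, the loop-transformed nonlinearities that actually appear are $\psi^{(1)}, \dots, \psi^{(l-1)}$; the term $\psi^{(l)}$ never enters, because $y^{(l)}$ is a purely linear readout of $z^{(l)}$. Equivalently, in the recursion \eqref{eq:LoopTransLip} the matrix $D^{(l)}$ appears only through the constant factor $D'^{(l)}$ and never inside the products, whose indices stop at $i = l-1$. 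The slope identity $\mathrm{diag}(\max(|\beta^{(i)} - d^{(i)}|, |d^{(i)} - \alpha^{(i)}|)) = D'^{(i)}$ — the step in the proof of Theorem \ref{thm:LocalLipLT} that actually needs the box constraint — is therefore invoked only for layers $i \le l-1$, whose matrices $D^{(1)}, \dots, D^{(l-1)}$ in $\tilde{\mathcal{D}}^{(l)}$ still satisfy it. Hence the final matrix $D^{(l)}$ may be chosen arbitrarily, and selecting it to force $D'^{(l)} = I$ is harmless, completing the argument.
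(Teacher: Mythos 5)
Your proposal is correct and follows essentially the same route as the paper, which proves the corollary inline by simply choosing the last loop-transformation matrix so that $D'^{(l)} = \mathrm{diag}(\beta^{(l)} - d^{(l)}) = I$, whence $y^{(l)} = z^{(l)}$ and Theorem~\ref{thm:LocalLipLT} applies directly. Your additional verification that the choice $D^{(l)} = \mathrm{diag}(\beta^{(l)}) - I$ is harmless despite violating the box constraint $0 \le d^{(l)}_i \le \frac{\alpha^{(l)}_i + \beta^{(l)}_i}{2}$ --- because $D^{(l)}$ enters $m^{(l)}$ only through the constant linear factor $D'^{(l)}$ and the slope identity is invoked only for layers $i \le l-1$ --- is a careful point the paper leaves implicit (it uses the same device for $D'^{(L)} = I$ in the theorem itself), and it strengthens rather than changes the argument.
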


\smallskip 
\subsubsection*{Choice of Loop Transformation}
As established in the previous theorem, local \LipMethodName provides a parameterized upper bound on the Lipschitz constant of the network. In particular, without loop transformation, the resulting upper bound coincides with the Naive bound. Thus we can improve the Naive bound by optimizing over the loop transformation parameters
$$\min_{\mathcal{D}^{(L)}} m^{(L)}(\mathcal{D}^{(L)}).$$
To avoid solving this non-convex problem, we propose an analytical choice that provably improves the Naive bounds.
\begin{proposition}\label{prop:LT}
    The choice of $D^{(l)} = \mathrm{diag} (\frac{\beta^{(l)}}{2})$ for $l = 1, \cdots L-1$ provably improves the naive Lipschitz bound.
    \begin{align}
        m^{(L)}(\mathcal{D}^{(L)}) \leq m^{(L)}(0). \notag
    \end{align}
\end{proposition}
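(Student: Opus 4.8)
The plan is to exploit the fact that the proposed choice makes the loop-transformation matrix and the residual matrix coincide. Since \Cref{asmp:activations} gives $0 \le \alpha^{(l)} \le \beta^{(l)}$, the choice $d^{(l)}=\beta^{(l)}/2$ satisfies the feasibility constraint $0 \le d^{(l)} \le (\alpha^{(l)}+\beta^{(l)})/2$ and yields $D^{(l)} = D'^{(l)} = \tfrac12 \mathrm{diag}(\beta^{(l)})$ for $l=1,\dots,L-1$, while $D'^{(L)}=I$. I would first record a closed form for the naive bound: setting every $D^{(i)}=0$ annihilates every product $\prod_i D^{(i)}W^{(i)}$ with a nonempty index range, so in \eqref{eq:LoopTransLip} only the $j=l-1$ summand (whose product is empty, hence $I$) survives. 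Induction then gives, with $a_i := \|\mathrm{diag}(\beta^{(i)})W^{(i)}\|$, that $m^{(l)}(0) = \prod_{i=1}^{l} a_i$ for $l \le L-1$ and $m^{(L)}(0) = \|W^{(L)}\|\prod_{i=1}^{L-1} a_i$, i.e.\ a product of effective layer norms.

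Next I would prove by induction on $l$ the sharper claim $m^{(l)}(\mathcal{D}^{(l)}) \le \tfrac12\, m^{(l)}(0)$ for $1 \le l \le L-1$. The base case $l=1$ is immediate, since $m^{(1)} = \|\tfrac12\mathrm{diag}(\beta^{(1)})W^{(1)}\| = \tfrac12 a_1$. For the inductive step I would apply positive homogeneity of the operator norm to pull the scalar $\tfrac12$ out of each factor (a nested product of $k$ factors contributes $2^{-k}$) and submultiplicativity $\|AB\|\le\|A\|\|B\|$ to bound each remaining product norm by the corresponding product of the $a_i$. Substituting the inductive hypothesis $m^{(j)} \le \tfrac12 \prod_{i=1}^{j} a_i$ makes every term in \eqref{eq:LoopTransLip} a scalar multiple of $m^{(l)}(0)=\prod_{i=1}^l a_i$, leaving
\[ m^{(l)}(\mathcal{D}^{(l)}) \le \Bigl(\tfrac{1}{2^{l}} + \sum_{j=1}^{l-1}\tfrac{1}{2^{\,l-j}}\cdot\tfrac12\Bigr) m^{(l)}(0). \]
A short evaluation of this geometric sum gives the bracket equal to $\tfrac12$ exactly, which closes the induction.

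Finally I would treat the output layer $l=L$ separately, the only place where $D'^{(L)}=I$ costs one fewer factor of $\tfrac12$. Running the same two norm inequalities and the hypothesis $m^{(j)}\le\tfrac12\prod_{i\le j}a_i$ yields the bracket $\tfrac{1}{2^{L-1}} + \sum_{j=1}^{L-1} 2^{-(L-j)}$, whose geometric part equals $1-2^{-(L-1)}$, so the bracket collapses to exactly $1$ and we obtain $m^{(L)}(\mathcal{D}^{(L)}) \le m^{(L)}(0)$ as claimed. The main obstacle is purely combinatorial bookkeeping: correctly tracking how many factors of $\tfrac12$ each nested product carries and verifying that the telescoping geometric sum is constant ($=\tfrac12$) across all interior layers. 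It is precisely the missing activation at the output, forcing $D'^{(L)}=I$, that converts the per-layer factor $\tfrac12$ into the boundary value $1$, so the bound improves strictly in the interior yet exactly meets $m^{(L)}(0)$ at the final comparison. I would also verify that submultiplicativity is applied consistently, which holds since every norm in \eqref{eq:LoopTransLip} is the same operator $p$-norm.
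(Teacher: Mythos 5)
Your proof is correct and follows essentially the same route as the paper: the same induction on the sharper claim $m^{(l)}(\mathcal{D}^{(l)}) \leq \tfrac{1}{2} m^{(l)}(0)$ for interior layers, the same use of homogeneity and submultiplicativity of the operator norm to extract powers of $\tfrac12$, the same geometric-sum evaluation, and the same separate treatment of the output layer where $D'^{(L)}=I$. Your explicit verification of the final-layer bracket collapsing to exactly $1$ (which the paper leaves as "similar to the proof of the induction") and the feasibility check $0 \le \beta^{(l)}/2 \le (\alpha^{(l)}+\beta^{(l)})/2$ are welcome details but do not constitute a different argument.
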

The proof is provided in {Appendix \ref{subsec:proofs}}.

\begin{remark}[Comparison with global LipLT \cite{fazlyab2024certified}]
By relaxing all the slope bounds $\tilde{\alpha} = \min_{(i, l)} \alpha^{(l)}_i$ and $\tilde{\beta} = \max_{(i, l)} \beta^{(l)}_i$, local LipLT with the choice of $d^{(l)} = \frac{\tilde{\alpha}^{(l)} + \tilde{\beta}^{(l)}}{2}$, reduces to global LipLT. 
\end{remark}
Having established a method to obtain bounds on the Lipschitz constant, we focus on obtaining bounds on the Hessian of neural networks in the next section.


\begin{algorithm}[t]
   \caption{Local LipLT}
   \label{alg:LocalLipLT}
\begin{algorithmic}
   \STATE {\bfseries Input:} $L$-layer neural network in the form of \eqref{eq:NNdef}. \\
   Loop transformation matrices $D^{(l)}$ for $l = 1, \cdots L-1$.
   \STATE {\bfseries Output:} Lipschitz constant of the network $m^{(L)}(\mathcal{D}^{(L)})$.
   \STATE {\bfseries Initialize} $m^{(1)}(\mathcal{D}^{(1)}) = \|D'^{(1)}W^{(1)}\|$
   \FOR{$l=2$ {\bfseries to} $L$}
   \STATE Calculate $m^{(l)}(\mathcal{D}^{(l)})$ according to \eqref{eq:LoopTransLip}.
   \ENDFOR
   \STATE {\bfseries Return} $m^{(L)}(\mathcal{D}^{(L)})$.
\end{algorithmic}
\end{algorithm}

\section{Hessian approximation of smooth neural networks}\label{sec:hess}
In this section, we will leverage local LipLT to compute the matrices $M$ and $N$ that bound the Hessian of $J$ in \eqref{eq: bounds on Hessian}. 
%
%
%
We start by noting that $J$ defined in \eqref{eq:boundProblem} is essentially a scalar-valued neural network. 
To construct $J$ from the original network $f$, one may modify the last linear layer parameters as $W^{(L)} \leftarrow  c^\top W^{(L)}$  and $b^{(L)} \leftarrow c^\top b^{(L)}$. 
From this point forward we consider $J \colon \mathbb{R}^{n_x} \rightarrow \mathbb{R}$ to be a fully connected neural network with the same architecture as $f$, but with a different final layer. 
Figure \ref{fig:description} shows the weights and architecture of the modified network.


To obtain a bound on Hessian, we will start with two-layer neural networks and then extend the method to multi-layer networks.

\subsection{Two-layer neural networks}\label{subsec:Two-layer neural networks}
Consider a two-layer neural network $f(x) = W^{(2)} \sigma(W^{(1)}x)$. The Hessian of $J(x)= (c^\top W^{(2)})\sigma(W^{(1)}x)$ is given by
\begin{align} \label{eq: hessian two-layer}
    \nabla^2 J(x) &= W^{(1)^\top} \mathrm{diag}( {W^{(2)}}^\top c \odot  \sigma'' (W^{(1)}x) W^{(1)} \\
    &= \sum_{j=1}^{n_1} (c^\top W^{(2)})_{j} \sigma''(W^{(1)^\top}_jx) W^{(1)}_jW^{(1)\top}_j. \notag
\end{align}
Suppose the $j$-th activation function ($j \in [n_1]$) satisfies the following local bounds,
$$\alpha'_{j} \leq \sigma_j''(W^{(1)^\top}_jx) \leq \beta'_{j} \quad \forall x \in \mathcal{X},$$ 
i.e., $\sigma_j' \in \mathrm{slope}(\alpha'_{j}, \beta'_{j})$. Both $\alpha'_{j}$ and $\beta'_{j}$ can be computed using bound propagation methods for neural networks such as IBP \cite{gowal2018effectiveness}, and CROWN \cite{zhang2018efficient}. 
By substituting the preceding bounds in \eqref{eq: hessian two-layer}, one can obtain the upper bound matrix $M$ as follows,
\begin{align}
    &M = \sum_{j=1}^{n_1}  M_j W^{(1)}_jW^{(1)\top}_j, \notag \\
    &M_j = \Bigl( \beta'_{j}\max(0, (c^\top W^{(2)})_{j}) + \alpha'_{j} \min (0, (c^\top W^{(2)})_{j}) \Bigr). \notag
\end{align}
The lower bound $N$ can similarly be computed as 
\begin{align}
    &N = \sum_{j=1}^{n_1} N_jW^{(1)}_jW^{(1)\top}_j,\notag \\
    &N_j = \Bigl( \alpha'_{j}\max(0, (c^\top W^{(2)})_{j}) + \beta'_{j} \min (0, (c^\top W^{(2)})_{j}) \Bigr). \notag
\end{align}

\begin{figure}[t]
    \centering
    \includegraphics[width=0.5\textwidth]{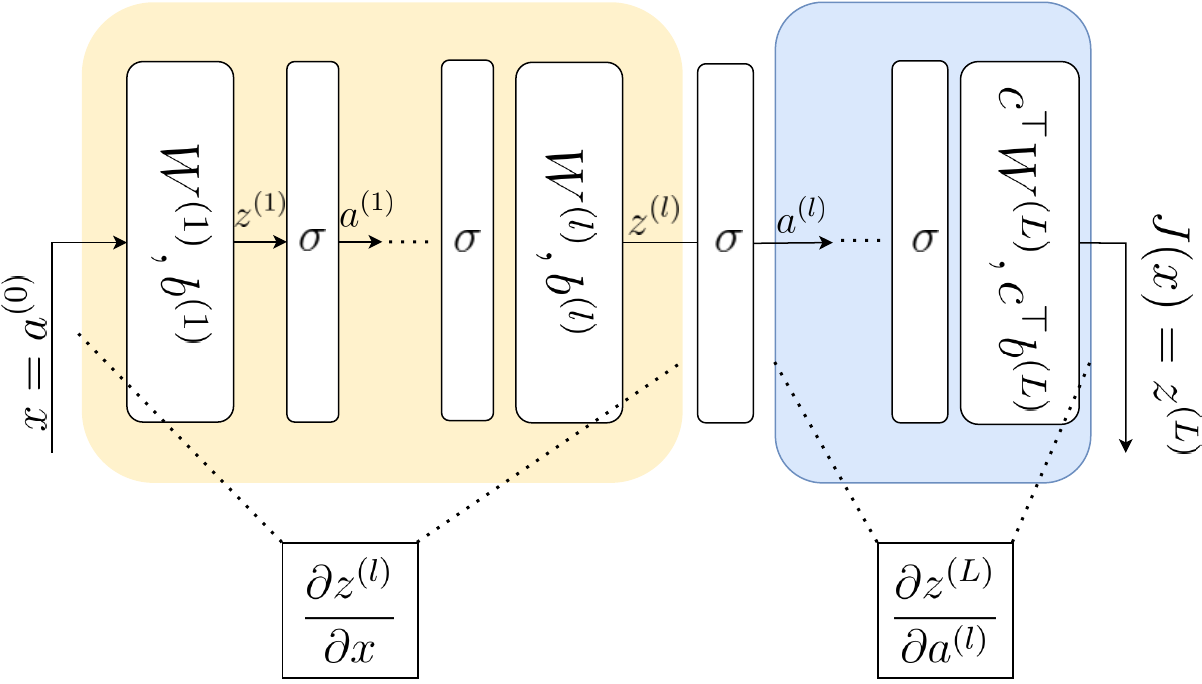}
    \caption{Figure of the modified neural network. The yellow part is the sub-network consisting of the first $l$ layers and the blue part is the sub-network consisting of the final $(L-l)$ layers.}
    \label{fig:description}
\end{figure}

\subsection{Multi-layer neural networks}
Finding the bounding matrices $M$ and $N$ for multi-layer neural networks in the same spirit as two-layer neural networks is more challenging. Instead, we bound the $\ell_2$ norm of the Hessian such that $M=-N= \sup_{x \in \mathcal{X}} \|\nabla^2 J(x)\|_2 I$. As a starting point, we leverage the following result from \cite{singla2020second}. %
\begin{lemma}{\cite[Lemma 1]{singla2020second}}\label{lemma:hessian}
Given an $L$-layer scalar neural network of the form \eqref{eq:NNdef}, the Hessian of the output $z^{(L)}$ of the network with respect to the input $x$, is given by  
    \begin{align}
        \nabla^2 z^{(L)} = 
        \sum_{l=1}^{L-1} \frac{\partial z^{(l)}}{\partial x}^\top \mathrm{diag}(\sigma''^{(l)}(z^{(l)}) \odot ({\frac{\partial z^{(L)}}{\partial a^{(l)}}})^\top )\frac{\partial z^{(l)}}{\partial x}. \notag
    \end{align}
\end{lemma}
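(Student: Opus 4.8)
The plan is to prove this Hessian formula by carefully differentiating the neural network recursion twice using the chain rule and collecting terms layer by layer. First I would set up the notation for the first derivative. Since $z^{(L)}$ is scalar-valued, I would compute the gradient $\frac{\partial z^{(L)}}{\partial x}$ by backpropagation through the recursion in \eqref{eq:NNdef}, writing $\frac{\partial z^{(L)}}{\partial x} = \frac{\partial z^{(L)}}{\partial a^{(l)}} \frac{\partial a^{(l)}}{\partial z^{(l)}} \frac{\partial z^{(l)}}{\partial x}$ and noting that $\frac{\partial a^{(l)}}{\partial z^{(l)}} = \mathrm{diag}(\sigma'^{(l)}(z^{(l)}))$ because the activation acts coordinatewise. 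The key structural observation is that $\frac{\partial z^{(l)}}{\partial x} = W^{(l)} \mathrm{diag}(\sigma'^{(l-1)}(z^{(l-1)})) \cdots$, i.e. the Jacobian of the pre-activation map is itself built from products of weight matrices and diagonal activation-derivative matrices.

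Next I would differentiate the gradient a second time. The natural approach is to express $\nabla^2 z^{(L)}$ as the derivative of $\frac{\partial z^{(L)}}{\partial x}^\top$ with respect to $x$. The crucial point is where the second-order terms arise: since the network is affine in each layer's weights and the only nonlinearities are the coordinatewise activations, the sole source of curvature is differentiating the terms $\sigma'^{(l)}(z^{(l)})$ that appear inside the first-derivative expression. Differentiating $\sigma'^{(l)}_i(z^{(l)}_i)$ produces $\sigma''^{(l)}_i(z^{(l)}_i)$ times $\frac{\partial z^{(l)}_i}{\partial x}$, which accounts for the $\mathrm{diag}(\sigma''^{(l)}(z^{(l)}))$ factor and one of the two $\frac{\partial z^{(l)}}{\partial x}$ factors. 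The remaining $\frac{\partial z^{(l)}}{\partial x}^\top$ on the left and the weighting by $\frac{\partial z^{(L)}}{\partial a^{(l)}}$ come from tracking how a perturbation at layer $l$ propagates both backward to the input and forward to the output. I would carry out this bookkeeping carefully, summing over all layers $l = 1, \dots, L-1$ at which curvature can be injected, since each activation layer contributes one such term.

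Concretely, I would proceed by induction on the number of layers, or equivalently by isolating the contribution of each intermediate activation. Fixing a layer $l$, I would write the sensitivity of the output to a perturbation of the pre-activation $z^{(l)}$, which is the backward Jacobian $\frac{\partial z^{(L)}}{\partial z^{(l)}} = \frac{\partial z^{(L)}}{\partial a^{(l)}} \mathrm{diag}(\sigma'^{(l)}(z^{(l)}))$, and then differentiate once more with respect to $x$. The Hadamard-product structure $\sigma''^{(l)}(z^{(l)}) \odot (\frac{\partial z^{(L)}}{\partial a^{(l)}})^\top$ emerges because each diagonal entry of the injected curvature matrix is the product of the scalar second derivative $\sigma''^{(l)}_i$ and the scalar output-sensitivity $(\frac{\partial z^{(L)}}{\partial a^{(l)}})_i$ at that same coordinate $i$. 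Finally, the quadratic form is sandwiched between $\frac{\partial z^{(l)}}{\partial x}^\top$ and $\frac{\partial z^{(l)}}{\partial x}$ because the curvature at layer $l$ is felt through the input-to-$z^{(l)}$ Jacobian on both sides.

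The main obstacle I anticipate is the index bookkeeping: keeping straight which Jacobian factors are forward (input-to-$z^{(l)}$) versus backward (output-to-$a^{(l)}$), ensuring the Hadamard product correctly captures the coordinatewise coupling between the second-derivative diagonal and the backward sensitivity, and verifying that cross-layer terms (where curvature from one layer is modulated by derivatives from another) do not produce additional contributions beyond the single sum over $l$. The latter is the genuinely subtle part: one must confirm that, because each activation is coordinatewise and the weights are constant, the second differentiation of a product of many diagonal matrices yields exactly one $\sigma''$ factor per term with no higher-order mixing at a single layer, so that the total Hessian is a clean sum of $L-1$ rank-structured terms. Since this result is quoted from \cite{singla2020second}, I would cite it directly rather than reproducing the full derivation, but the sketch above indicates how the proof goes.
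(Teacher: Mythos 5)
Your proposal is correct and consistent with the paper's treatment: the paper gives no proof of this lemma at all, quoting it directly as Lemma 1 of \cite{singla2020second}, which is exactly what you recommend doing at the end. Your sketch of the underlying derivation is also sound — since the gradient $\frac{\partial z^{(L)}}{\partial x}$ is a product of the constant weight matrices $W^{(l)}$ and the $L-1$ diagonal factors $\mathrm{diag}(\sigma'^{(l)}(z^{(l)}))$, one more application of the product rule produces exactly one term per activation layer (resolving the cross-layer mixing concern you raise), and each such term is precisely $\frac{\partial z^{(l)}}{\partial x}^\top \mathrm{diag}\bigl(\sigma''^{(l)}(z^{(l)}) \odot (\frac{\partial z^{(L)}}{\partial a^{(l)}})^\top\bigr)\frac{\partial z^{(l)}}{\partial x}$.
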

By taking $\ell_2$ norm of the previous equation, we obtain
\begin{align}
    \|\nabla^2 J(x)\|_2 
    \leq  \sum_{l=1}^{L-1} \|\frac{\partial z^{(l)}}{\partial x}\|_2^2 \max_j (|\sigma''^{(l)}(z_{j}^{(l)}) {(\frac{\partial z^{(L)}}{\partial a^{(l)}})_j}|). \notag
\end{align}
Suppose $|\sigma''^{(l)}(z_{j}^{(l)})| \leq h_j^{(l)} = \max (|\alpha'^{(l)}_j|, |\beta'^{(l)}_j|)$ for some constant $h_j^{(l)}>0$; similar to the two-layer case, $h_j^{(l)}$ can be found using bound propagation. Then we can write
\begin{align}\label{eq:hessianNorm}
    \|\nabla^2 J(x)\|_2 
    &\leq  \sum_{l=1}^{L-1} \|\frac{\partial z^{(l)}}{\partial x}\|_2^2 \max_j (|h_j^{(l)} {(\frac{\partial z^{(L)}}{\partial a^{(l)}})_j}|) \\
    & = \sum_{l=1}^{L-1} \|\frac{\partial z^{(l)}}{\partial x}\|_2^2 \cdot \|\mathrm{diag}(h^{(l)})  (\frac{\partial z^{(L)}}{\partial a^{(l)}})^\top  \|_{\infty} \notag
\end{align}
%
We now proceed to bound the individual terms on the right-hand side of \eqref{eq:hessianNorm}.

\medskip \noindent
\subsubsection*{Bounding $\boldsymbol{\|\frac{\partial z^{(l)}}{\partial x}\|_2}$} 
Using the chain rule, one can obtain the following recursion
\begin{align}
    \frac{\partial z^{(l)}}{\partial x} =
    \begin{cases}
        W^{(1)}, & l =1 \\
        W^{(l)} \mathrm{diag}(\sigma'(z^{(l-1)}))\frac{\partial z^{(l-1)}}{\partial x},  & l \geq 2
    \end{cases} \notag
\end{align}
Directly bounding this recursion to obtain the Lipschitz constant, used in \cite{singla2020second}, results in a conservative bound, especially for deep networks. 
To address the conservatism, we note that $\|\frac{\partial z^{(l)}}{\partial x}\|_2$ is bounded by the $\ell_2$-Lipschitz constant of the sub-network $x \mapsto z^{(l)}(x)$. Thus, we can use local LipLT (see \Cref{cor:lipLTsubnet}) to bound the Lipschitz constants of \emph{all} sub-networks $x \mapsto z^{(l)}(x), \ l=1,\cdots,L-1$, which we need to compute \eqref{eq:hessianNorm}. 

 \medskip \noindent
\subsubsection*{Bounding $\boldsymbol{\|\mathrm{diag}(h^{(l)})  (\frac{\partial z^{(L)}}{\partial a^{(l)}})^\top  \|_{\infty}}$}
Note that $\|\mathrm{diag}(h^{(l)})  (\frac{\partial z^{(L)}}{\partial a^{(l)}})^\top  \|_{\infty}$ is bounded by the weighted $\ell_\infty$-Lipschitz constant of $a^{(l)} \mapsto z^{(L)}$.
Thus, we can use local LipLT for a slightly modified network (where $W^{(l)} \gets \mathrm{diag}(h^{(l)}) W^{(l)}$) to compute this bound.

Apart from this approach, we also adapt and generalize the result of \cite{singla2020second} for bounding $\| (\frac{\partial z^{(L)}}{\partial a^{(l)}})^\top \|_\infty$ in the following proposition, and then discuss how it can be applied to obtain the weighted Lipschitz constant.
\begin{proposition}\label{prop:lInfBound}
    We have
    $\|(\frac{\partial z^{(L)}}{\partial a^{(l)}})^\top\|_\infty \leq \|S^{(L, I)}\|_\infty$, where $S^{(L, I)}$ satisfies the following recursion
    \begin{align}\label{eq:infinitynorm}
        S^{(L, l)} = \begin{cases}
            |W^{(L)}| & l = L - 1 \\
            |W^{(L)}| \mathrm{diag}(\beta^{(L-1)}) S^{(L-1, l)} & l \leq L - 2
        \end{cases}, 
    \end{align}
\end{proposition}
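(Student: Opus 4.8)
The quantity $\frac{\partial z^{(L)}}{\partial a^{(l)}}$ is a row-vector-valued Jacobian (since $z^{(L)}$ is scalar), and the key is to write a chain-rule recursion that relates $\frac{\partial z^{(L)}}{\partial a^{(l)}}$ to $\frac{\partial z^{(L)}}{\partial a^{(l+1)}}$. First I would establish, using the network definition \eqref{eq:NNdef}, that for $l \leq L-2$ the chain rule gives
\begin{align}
    \frac{\partial z^{(L)}}{\partial a^{(l)}} = \frac{\partial z^{(L)}}{\partial a^{(l+1)}} \, \mathrm{diag}(\sigma'^{(l+1)}(z^{(l+1)})) \, W^{(l+1)}, \notag
\end{align}
while the base case is simply $\frac{\partial z^{(L)}}{\partial a^{(L-1)}} = W^{(L)}$.

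\textbf{The second step is to control the derivative factor entrywise.} By \Cref{asmp:activations}, each $\sigma'^{(l+1)}_i$ lies in $[\alpha^{(l+1)}_i, \beta^{(l+1)}_i]$, and since $0 \le \alpha^{(l+1)}_i \le \beta^{(l+1)}_i$, every diagonal entry of $\mathrm{diag}(\sigma'^{(l+1)}(z^{(l+1)}))$ is nonnegative and bounded above by the corresponding entry of $\beta^{(l+1)}$. Taking the $\ell_\infty$-operator norm of a transposed product and using the sub-multiplicativity $\|ABC\|_\infty \le \|A\|_\infty \|B\|_\infty \|C\|_\infty$, together with the entrywise domination $|W| \le |W|$ and the fact that for a nonnegative diagonal $D$ with $D \preceq \mathrm{diag}(\beta)$ one has the entrywise inequality $|W D| \le |W|\,\mathrm{diag}(\beta)$, I would show that applying absolute values and replacing $\mathrm{diag}(\sigma'^{(l+1)})$ by $\mathrm{diag}(\beta^{(l+1)})$ can only increase the $\ell_\infty$ norm. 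This is exactly what produces the surrogate recursion \eqref{eq:infinitynorm} for $S^{(L,l)}$ with $|W^{(L)}|$ in the base case and the factor $|W^{(L)}|\,\mathrm{diag}(\beta^{(L-1)})$ propagated inward.

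\textbf{The third step is to assemble the induction.} Assuming $\|(\frac{\partial z^{(L)}}{\partial a^{(l+1)}})^\top\|_\infty \le \|S^{(L,l+1)}\|_\infty$ (equivalently tracking the transposed quantities so that the $\ell_\infty \to \ell_\infty$ operator norm is the max absolute row sum acting from the correct side), I would apply the entrywise domination from the previous step so that the new factor $\mathrm{diag}(\sigma'^{(l+1)}(z^{(l+1)})) W^{(l+1)}$ is dominated in absolute value by $\mathrm{diag}(\beta^{(l+1)}) |W^{(l+1)}|$. The monotonicity of the $\ell_\infty$ norm under entrywise domination of nonnegative matrices then yields $\|(\frac{\partial z^{(L)}}{\partial a^{(l)}})^\top\|_\infty \le \|S^{(L,l)}\|_\infty$, completing the induction.

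\textbf{The main obstacle is bookkeeping the transpose and the side on which factors act.} The recursion in \eqref{eq:infinitynorm} multiplies $S$ on the \emph{left} by $|W^{(L)}|\mathrm{diag}(\beta^{(L-1)})$, whereas the natural chain rule for $\frac{\partial z^{(L)}}{\partial a^{(l)}}$ appends factors on the \emph{right}; reconciling these requires carefully transposing and verifying that the entrywise bound $|\mathrm{diag}(\sigma'(z))| \le \mathrm{diag}(\beta)$ survives the transposition and that the $\ell_\infty$ operator norm (max absolute row sum) is indeed monotone under entrywise domination of these nonnegative matrices. Once that indexing is pinned down precisely, the inequality chain is routine, so I would state the entrywise monotonicity lemma explicitly and then let the induction run.
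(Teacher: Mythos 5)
Your route is genuinely different from the paper's, and it is viable. The paper never differentiates anything: it inducts on the \emph{depth} index (from $S^{(L-1,l)}$ to $S^{(L,l)}$), carrying the elementwise finite-difference bound $|z^{(L)}(x)-z^{(L)}(y)| \le S^{(L,l)}\,|a^{(l)}(x)-a^{(l)}(y)|$ using only the slope restriction \eqref{eq:activSlope}, and the Jacobian-norm claim is read off at the end. You instead induct backward on $l$, using the chain rule $\frac{\partial z^{(L)}}{\partial a^{(l)}}=\frac{\partial z^{(L)}}{\partial a^{(l+1)}}\,\mathrm{diag}(\sigma'^{(l+1)}(z^{(l+1)}))\,W^{(l+1)}$ together with the derivative bounds $0\le\sigma'^{(l+1)}_i\le\beta^{(l+1)}_i$ implied by \Cref{asmp:activations}. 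Both work; yours needs differentiability (available here), the paper's needs only slope restriction. Your bookkeeping worry also resolves cleanly: the paper's recursion \eqref{eq:infinitynorm} prepends factors on the left in the depth index, your chain rule appends them on the right in $l$, but both unroll to the same product $S^{(L,l)}=|W^{(L)}|\,\mathrm{diag}(\beta^{(L-1)})\,|W^{(L-1)}|\cdots\mathrm{diag}(\beta^{(l+1)})\,|W^{(l+1)}|$, so that $S^{(L,l)}=S^{(L,l+1)}\,\mathrm{diag}(\beta^{(l+1)})\,|W^{(l+1)}|$ and your backward recursion targets the correct object.

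There is, however, one genuine flaw to repair: your induction hypothesis is stated at the norm level, $\|(\frac{\partial z^{(L)}}{\partial a^{(l+1)}})^\top\|_\infty \le \|S^{(L,l+1)}\|_\infty$, and that invariant is too weak to carry the induction step. From a norm bound on the leading factor plus an entrywise bound on the appended factor one cannot conclude $\|(\frac{\partial z^{(L)}}{\partial a^{(l)}})^\top\|_\infty \le \|S^{(L,l)}\|_\infty$; sub-multiplicativity only yields a product of norms, which can strictly exceed $\|S^{(L,l)}\|_\infty$. Concretely, take $A=[0\;\;3]$, $S=[2\;\;2]$, $B=C=(0,1)^\top$: then $\|A^\top\|_\infty=3\le\|S\|_\infty=4$ and $|B|\le C$ entrywise, yet $\|(AB)^\top\|_\infty=3>2=\|SC\|_\infty$. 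The cure is exactly the lemma you defer to the end: make the \emph{entrywise} domination $|\frac{\partial z^{(L)}}{\partial a^{(l)}}| \le S^{(L,l)}$ the induction invariant (base case $|W^{(L)}|\le |W^{(L)}|$; step: $|\frac{\partial z^{(L)}}{\partial a^{(l)}}|\le|\frac{\partial z^{(L)}}{\partial a^{(l+1)}}|\,\mathrm{diag}(\beta^{(l+1)})\,|W^{(l+1)}|\le S^{(L,l+1)}\,\mathrm{diag}(\beta^{(l+1)})\,|W^{(l+1)}|=S^{(L,l)}$), and pass to $\|\cdot\|_\infty$ only once at the very end by monotonicity of the norm under entrywise domination; this also renders the mixed placement of transposes in the statement harmless. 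That entrywise invariant is precisely what the paper's proof carries (its hypothesis is the elementwise difference bound), so with this one change your argument is complete.
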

See Appendix \ref{subsec:proofs} for the proof.

Applying \Cref{prop:lInfBound} to the modified network, the desired weighted $\ell_\infty$ Lipschitz bound can be calculated using \eqref{eq:infinitynorm}.
%

\section{Reachability Analysis}\label{sec:PropMeth}
Having developed an algorithm to compute the local bounds $M,N$ on the Hessian of $J$, in this section, we solve \eqref{eq:SecondOrderLowerBound} for any general norm $p$. We first assume that $y=x_c$, i.e., we use a first-order Taylor expansion of $J$ at the center of $\mathcal{X}$. We will then discuss the more general case $y\neq x_c$. Finally, we will integrate the bounds in a branch-and-bound framework to obtain the global solution of \eqref{eq:boundProblem} with arbitrary accuracy. 

%
\subsection{Bounding}
    Since the process of obtaining upper and lower bounds on $\sup_{x \in \mathcal{X}} J(x)$ are very similar, see \eqref{eq:SecondOrderLowerBound}, we will only focus on computing the upper bound. 
    To obtain a general result applicable to multi-layer neural networks, 
    we assume that only the maximum eigenvalue of  $M$ is available. We will provide a discussion about the case in which the whole matrix $M$ is known (for two-layer networks) in {Appendix \ref{subsec:proofs}}.

    \begin{proposition}\label{thm:operatorNorm}
        Suppose $M = \lambda_{\max}(M) I_{n_x}$ in \eqref{eq:subeq2}. Then, the first-order upper bound $\overline{J}^1_\mathcal{X}(x_c)$ in satisfies
        \begin{align}
        \overline{J}^1_{\mathcal{X}}(x_c) \! \leq \!
        J(x_c) \! + \! \|\nabla J(x_c)\|_{p^*}\varepsilon \! + \!
        \frac{\lambda_{\max}(M)}{2} \! n_0^{-\max(0, 1-\frac{2}{p})} \varepsilon^2 \! . \notag
        \end{align}
        Furthermore, when $p=2$, the inequality is tight and achieved at $\delta^* = \varepsilon \frac{\nabla J(x_c)}{\|\nabla J(x_c)\|_2}$. 
    \end{proposition}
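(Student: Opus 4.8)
The plan is to exploit the isotropic structure $M = \lambda_{\max}(M) I_{n_x}$ so that the non-convex quadratic program \eqref{eq:subeq2} collapses into a problem involving only the two scalar quantities $\|\delta\|_p$ and $\|\delta\|_2$. Substituting $M = \lambda_{\max}(M) I_{n_x}$ and $y = x_c$ into \eqref{eq:subeq2}, and using $\delta^\top M \delta = \lambda_{\max}(M)\|\delta\|_2^2$, gives
\begin{align}
    \overline{J}^1_{\mathcal{X}}(x_c) = J(x_c) + \sup_{\|\delta\|_p \le \varepsilon} \Bigl( \nabla J(x_c)^\top \delta + \tfrac{1}{2}\lambda_{\max}(M)\|\delta\|_2^2 \Bigr). \notag
\end{align}
I would then bound the supremum by subadditivity, splitting the objective into its linear and quadratic parts and maximizing each separately over the ball $\{\|\delta\|_p \le \varepsilon\}$.

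The linear part is handled directly by the definition of the dual norm (H\"older's inequality): $\sup_{\|\delta\|_p \le \varepsilon} \nabla J(x_c)^\top \delta = \|\nabla J(x_c)\|_{p^*}\varepsilon$, which supplies the first-order term. For the quadratic part, assuming $\lambda_{\max}(M) \ge 0$, maximizing $\tfrac{1}{2}\lambda_{\max}(M)\|\delta\|_2^2$ amounts to maximizing $\|\delta\|_2^2$ over the $\ell_p$ ball of radius $\varepsilon$. Here I would invoke the norm-equivalence inequality \eqref{eq:normIneq} with $q = 2$ to pass between $\|\delta\|_2$ and $\|\delta\|_p \le \varepsilon$; squaring the resulting bound produces the dimension-dependent prefactor on $\varepsilon^2$ that appears in the statement.

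The step I expect to be the main obstacle is precisely the bookkeeping in this norm conversion: one has to select the correct direction of \eqref{eq:normIneq}, carry the $\max(0, \cdot)$ through to the exponent of $n_0$, and keep the coefficient $\tfrac{1}{2}\lambda_{\max}(M)$ consistent. It is also worth making explicit the reliance on $\lambda_{\max}(M) \ge 0$, since otherwise the quadratic supremum is attained at $\delta = 0$ and the analysis changes.

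For the tightness claim at $p = 2$, the dual norm is again the $\ell_2$ norm and the dimensional factor reduces to $n_0^{0} = 1$, so the chain of inequalities is relaxation-free. I would then check that $\delta^\star = \varepsilon\, \nabla J(x_c)/\|\nabla J(x_c)\|_2$ is feasible, with $\|\delta^\star\|_2 = \varepsilon$, and that it attains both suprema simultaneously: the linear term equals $\|\nabla J(x_c)\|_2\varepsilon$ by the equality case of Cauchy--Schwarz, while the quadratic term equals $\tfrac{1}{2}\lambda_{\max}(M)\varepsilon^2$ because $\|\delta^\star\|_2$ is maximal on the ball. Since both are maximized at the same point, the subadditivity split holds with equality, and (using convexity of the objective when $\lambda_{\max}(M)\ge 0$, which places the maximizer on the boundary) the bound is exact.
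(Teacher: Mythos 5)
Your proposal is correct and follows essentially the same route as the paper's proof: substitute $M=\lambda_{\max}(M)I_{n_x}$, split the supremum subadditively into linear and quadratic parts, bound the linear part via the dual norm and the quadratic part via \eqref{eq:normIneq} with $q=2$, and observe that for $p=2$ the point $\delta^* = \varepsilon\,\nabla J(x_c)/\|\nabla J(x_c)\|_2$ attains both suprema simultaneously. Your derivation yields the exponent $n_0^{\max(0,1-\frac{2}{p})}$ (as in the paper's proof; the negative exponent in the proposition statement is a typo), and your explicit remark that $\lambda_{\max}(M)\geq 0$ is needed for the quadratic bound is a valid caveat the paper leaves implicit.
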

    \begin{proof} 
        We can write
        \begin{align*}
            &\overline{J}^1_{\mathcal{X}}(x_c) =  J(x_c) +\sup_{\|\delta\|_p \leq \varepsilon} \{ \nabla J(x_c)^\top \delta + \frac{\lambda_{\max}(M)}{2} \|\delta\|_{2}^2 \}. \notag\\
            & \leq J(x_c)\! +\!\sup_{\|\delta\|_p \leq \varepsilon} \{ \nabla J(x_c)^\top \delta \} \!+\! \sup_{\|\delta\|_p \leq \varepsilon} \frac{\lambda_{\max}(M)}{2} \|\delta\|_2^2 \notag \\
            &\leq J(x_c) + \|\nabla J(x_c)\|_{p^*}\varepsilon +\frac{\lambda_{\max}(M)}{2} n_0^{\max(0, 1-\frac{2}{p})}  \varepsilon^2. \notag
        \end{align*}
        In the last inequality, we have used the definition of dual norm and the following
        \begin{align}
            \sup_{\|\delta\|_p \leq \varepsilon} \|\delta\|_2^2 \leq
            \sup_{\|\delta\|_p \leq  \varepsilon} n_0^{\max(0, 1-\frac{2}{p})} \|\delta\|_p^2 = 
            n_0^{\max(0, 1-\frac{2}{p})} \varepsilon^2, \notag
        \end{align}
        which can be shown using \eqref{eq:normIneq}.
        For $p=2$, $\delta^*=  \varepsilon \frac{\nabla J(x_c)}{\|\nabla J(x_c)\|_2}$ makes all the inequalities tight.
         
    \end{proof}

\begin{figure}[t]
    \centering
    \includegraphics[width= 0.4\textwidth]{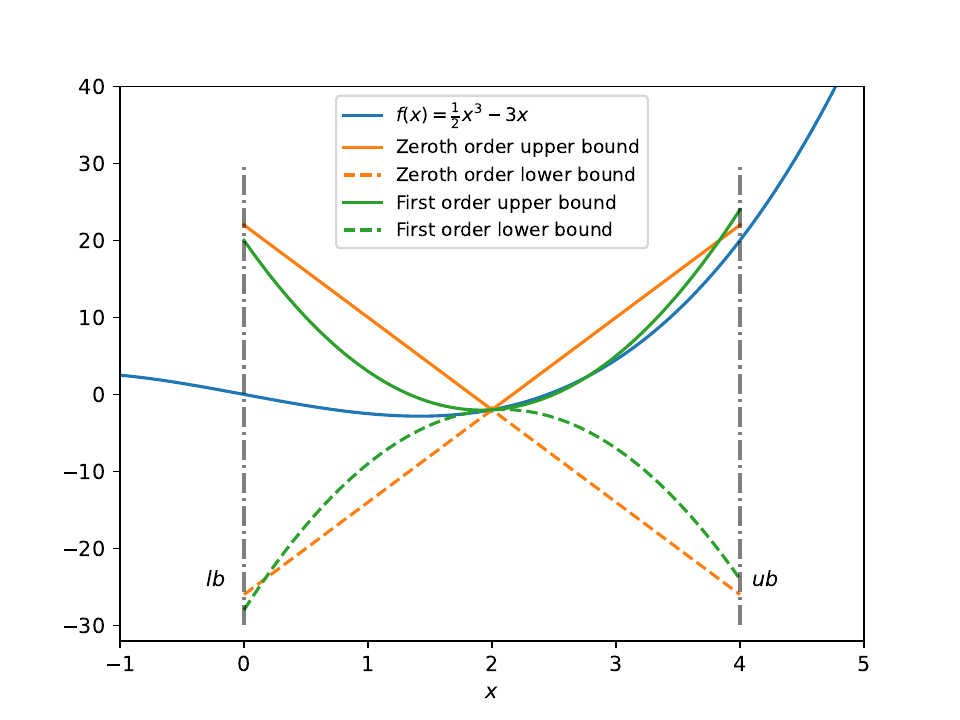}
    \caption{Zeroth and first-order upper bounds.}
    \label{fig:method}
\end{figure}

\medskip 
\subsubsection*{First-order Versus Zeroth-order Bound}
   We now compare the zeroth and first-order bounds.
   Intuitively, over a sufficiently small neighborhood around \( x_c \), the first-order method must provide better upper bounds due to its gradient-preserving property. Additionally, the first-order method always yields better lower bounds. See \Cref{fig:method} for an illustration.
    We formalize these claims in the following theorem. 
    \begin{theorem}\label{prop:bndCmp}
         Let $L$ be the Lipschitz constant of $J$ on $\mathcal{X} = \{x \mid \|x-x_c\| \leq  \varepsilon\}$ in $\ell_p$ norm. 
        Then $\overline{J}^1_\mathcal{X}(x_c) \leq \overline{J}^0_\mathcal{X}(x_c)$ if and only if $\varepsilon \in (0,\varepsilon_{\max}]$, where
        $$\varepsilon_{\max}=\frac{2}{\lambda_{\max}(M)}n_0^{-\max(0, 1-\frac{2}{p})}(L - \|\nabla J(x_c)\|_{p^*}).$$
        Furthermore, $\underline{J}^1_\mathcal{X}(x_c) \geq \underline{J}^0_\mathcal{X}(x_c)$ for all $\varepsilon \geq 0$.
    \end{theorem}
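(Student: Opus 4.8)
The plan is to derive both inequalities directly from the closed-form expressions of the bounds established earlier, treating $\overline{J}^1_\mathcal{X}(x_c)$ as the computable upper bound produced by \Cref{thm:operatorNorm}. Since for $p\neq 2$ that proposition gives an over-estimate of the true supremum in \eqref{eq:subeq2}, the ``if and only if'' is to be understood at the level of these computed bounds, which is what the branch-and-bound procedure actually manipulates; I would state this convention explicitly before the algebra.

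For the upper-bound comparison I would substitute $\overline{J}^0_\mathcal{X}(x_c)=J(x_c)+L\varepsilon$ from \eqref{eq:zeroth-orderbounds} together with
\[
\overline{J}^1_\mathcal{X}(x_c)=J(x_c)+\|\nabla J(x_c)\|_{p^*}\varepsilon+\tfrac{\lambda_{\max}(M)}{2}\,n_0^{\max(0,1-\frac{2}{p})}\varepsilon^2
\]
from \Cref{thm:operatorNorm} into $\overline{J}^1_\mathcal{X}(x_c)\le \overline{J}^0_\mathcal{X}(x_c)$. Cancelling $J(x_c)$ and, for $\varepsilon>0$, dividing by $\varepsilon$ reduces the claim to the linear condition $\tfrac{\lambda_{\max}(M)}{2}n_0^{\max(0,1-\frac{2}{p})}\varepsilon\le L-\|\nabla J(x_c)\|_{p^*}$. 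Solving for $\varepsilon$ gives precisely the stated $\varepsilon_{\max}$; note that inverting the $\varepsilon^2$-coefficient flips the sign of the exponent, which is why $\varepsilon_{\max}$ carries the factor $n_0^{-\max(0,1-\frac{2}{p})}$. Because each step is an equivalence for $\varepsilon>0$ (assuming $\lambda_{\max}(M)>0$, so that the threshold is finite), this establishes the ``iff''.

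It then remains to confirm $\varepsilon_{\max}\ge 0$, i.e. $\|\nabla J(x_c)\|_{p^*}\le L$. This is immediate from the preliminaries: Lipschitz continuity of $J$ on $\mathcal{X}$ with constant $L$ in the $\ell_p$ norm forces $\|\nabla J(x)\|_{p^*}\le L$ for all $x\in\mathcal{X}$, and in particular at $x=x_c$. I would also dispatch the degenerate cases $\lambda_{\max}(M)\le 0$, which are not captured by a finite $\varepsilon_{\max}$: there the quadratic term is non-positive, so $\overline{J}^1_\mathcal{X}(x_c)\le J(x_c)+\|\nabla J(x_c)\|_{p^*}\varepsilon\le \overline{J}^0_\mathcal{X}(x_c)$ for every $\varepsilon\ge0$ by the same gradient bound.

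For the lower-bound claim I would use $\underline{J}^0_\mathcal{X}(x_c)=J(x_c)$ and the definition of $\underline{J}^1_\mathcal{X}(x_c)$ in \eqref{eq:subeq1} as $J(x_c)$ plus the supremum over $x\in\mathcal{X}$ of $\nabla J(x_c)^\top\delta+\tfrac12\delta^\top N\delta$. Since $x_c$ is the center of the ball $\mathcal{X}$, the point $\delta=0$ is always feasible and annihilates the objective, so the supremum is nonnegative and $\underline{J}^1_\mathcal{X}(x_c)\ge J(x_c)=\underline{J}^0_\mathcal{X}(x_c)$ for all $\varepsilon\ge0$. I expect no genuine obstacle: the whole argument is elementary algebra together with the feasibility of $\delta=0$, and the only points requiring care are the bookkeeping of the $n_0$-exponent and the convention on $\overline{J}^1_\mathcal{X}(x_c)$ noted at the outset.
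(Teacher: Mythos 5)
Your proposal is correct and follows essentially the same route as the paper: rearranging the closed-form expressions $\overline{J}^0_\mathcal{X}(x_c)=J(x_c)+L\varepsilon$ and $\overline{J}^1_\mathcal{X}(x_c)=J(x_c)+\|\nabla J(x_c)\|_{p^*}\varepsilon+\tfrac{\lambda_{\max}(M)}{2}n_0^{\max(0,1-\frac{2}{p})}\varepsilon^2$ into an equivalent linear condition on $\varepsilon$ for the ``iff'', and using feasibility of $\delta=0$ in \eqref{eq:subeq1} for the lower-bound claim. Your additional remarks---making explicit that $\overline{J}^1$ is the computed bound of \Cref{thm:operatorNorm} (which the paper tacitly assumes), checking $\|\nabla J(x_c)\|_{p^*}\le L$ so that $\varepsilon_{\max}\ge 0$, and dispatching $\lambda_{\max}(M)\le 0$---are careful refinements of the same argument rather than a different approach.
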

    \begin{proof}
        Suppose $\varepsilon > 0$ satisfies
        \begin{align}
            \varepsilon \leq 
            \frac{2}{\lambda_{\max}(M)}n_0^{-\max(0, 1-\frac{2}{p})}(L - \|\nabla J(x_c)\|_{p^*}). \notag
        \end{align}
        By multiplying both sides by $\varepsilon$, we arrive at
        \begin{align}
            L \varepsilon \geq
            \|\nabla J(x_c)\|_{p^*} \varepsilon  + \frac{\lambda_{\max}(M)}{2} n_0^{\max(0, 1-\frac{2}{p})} \varepsilon^2,\notag
        \end{align}
        by adding $J(x_c)$ to both sides, we obtain
        \begin{multline}
            \overline{J}^0_\mathcal{X}(x_c) = J(x_c) + L \varepsilon 
            \geq \\
             J(x_c) + \|\nabla J(x_c)\|_{p^*} \varepsilon  + \frac{\lambda_{\max}(M)}{2} n_0^{\max(0, 1-\frac{2}{p})} \varepsilon^2
             = \overline{J}^1_\mathcal{X}(x_c).\notag
        \end{multline}
        The converse can be proved similarly.
        
        The first-order lower bound $\underline{J}^1_\mathcal{X}$ is obtained by solving \eqref{eq:subeq1}, and the zeroth-order lower bound was shown to be $\underline{J}^0_\mathcal{X} = J(x_c)$ in \eqref{eq:zeroth-orderbounds}. But $J(x_c)$ is always a valid lower bound for  \eqref{eq:subeq1}. 
    Therefore, we have $\underline{J}^0_\mathcal{X} = J(x_c) 
        \leq \underline{J}^1_\mathcal{X}
        \leq \sup_{x \in \mathcal{X}} J(x)$. 
    \end{proof}

%

\medskip \noindent
\subsubsection*{Optimal First-order Bound}
In deriving the first-order bounds, we used a Taylor approximation of $J$ around the \emph{center} $x_c$ of $\mathcal{X}$ by setting $y=x_c$. However, this choice can be sub-optimal due to the asymmetry of the Taylor approximation. We can instead consider a Taylor approximation at an \emph{arbitrary} $y \in \mathcal{X}$ and optimize the resulting bounds over $y$. 
Explicitly, we have
\begin{align}
\underline{J}^1_{\mathcal{X}}(x_c) \leq \sup_{y \in \mathcal{X}} \underline{J}^1_{\mathcal{X}}(y) \leq \sup_{x \in \mathcal{X}} J(x) \leq \inf_{y \in \mathcal{X}} \bar{J}^1_{\mathcal{X}}(y) \leq \bar{J}^1_{\mathcal{X}}(x_c).  \notag
\end{align}
In the following proposition, we first find the maximizer of the first-order upper bound \eqref{eq:subeq2} as a function of $y$. We will then prescribe an analytical $y \neq x_c$ that improves the bounds obtained by the default choice $y=x_c$. 

\begin{proposition}\label{prop:firstOrderatY}
        Suppose $M = \lambda_{\max}(M) I_{n_x}$ in \eqref{eq:subeq2}. For $p=2$, the solution of \eqref{eq:subeq2} can be calculated as
        \begin{align}
            x^*(y) = x_c +  \varepsilon \frac{\nabla J(y) - \lambda_{\max}(M) ( y - x_c)}{\|\nabla J(y) - \lambda_{\max}(M) ( y - x_c)\|_2}.\notag
        \end{align}
        And for $p = \infty$, we have
       \begin{align}
           x^*(y)= x_c + n_0^\frac{-1}{p}\varepsilon \cdot\mathrm{sign}(\nabla J(y) -\lambda_{\max}(M)(y - x_{c})). \notag
       \end{align}
    \end{proposition}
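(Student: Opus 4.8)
The plan is to reduce \eqref{eq:subeq2} to a single norm-constrained optimization problem and then exploit convexity together with a special feature of the $\ell_2$ quadratic term. Writing $\lambda := \lambda_{\max}(M)$, which is nonnegative in the relevant setting where $M$ bounds the Hessian in $\ell_2$ operator norm, I would reparametrize a feasible point as $x = x_c + u$ with $\|u\|_p \leq \varepsilon$ and substitute $\delta = x - y = u + (x_c - y)$ into the inner objective $\nabla J(y)^\top \delta + \tfrac{1}{2}\delta^\top M\delta = \nabla J(y)^\top\delta + \tfrac{\lambda}{2}\|\delta\|_2^2$. Expanding and collecting only the $u$-dependent terms, the problem becomes
\[
\max_{\|u\|_p \leq \varepsilon} \; v^\top u + \frac{\lambda}{2}\|u\|_2^2, \qquad v := \nabla J(y) - \lambda_{\max}(M)(y - x_c),
\]
since the remaining contribution $\nabla J(y)^\top(x_c-y) + \tfrac{\lambda}{2}\|x_c-y\|_2^2$ is constant in $u$. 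This isolates the effect of the shift $y-x_c$ inside the effective linear coefficient $v$.

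Next I would argue that the maximizer lies at an extreme point of the feasible ball. Because $\lambda \geq 0$, the objective $h(u) = v^\top u + \tfrac{\lambda}{2}\|u\|_2^2$ is convex, and a convex function over the compact convex set $\{\|u\|_p \leq \varepsilon\}$ attains its maximum at an extreme point of that set. The crucial observation---the step that makes both cases clean---is that the $\ell_2$ quadratic term is \emph{constant} over those extreme points in each case: for $p=2$ the extreme points form the sphere $\|u\|_2 = \varepsilon$, while for $p=\infty$ they are the vertices $u \in \{\pm\varepsilon\}^{n_0}$, at which $\|u\|_2^2 = n_0\varepsilon^2$. In either case the quadratic term drops out of the comparison, leaving a purely linear maximization of $v^\top u$ over the extreme points.

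For $p=2$, maximizing $v^\top u$ subject to $\|u\|_2 = \varepsilon$ is settled by Cauchy--Schwarz, giving $u^* = \varepsilon\, v/\|v\|_2$ and hence $x^*(y) = x_c + \varepsilon\, v/\|v\|_2$, the stated formula. For $p=\infty$, the linear objective $v^\top u$ is separable across coordinates over the box, so each coordinate is chosen independently as $u_i^* = \varepsilon\,\mathrm{sign}(v_i)$, i.e.\ $u^* = \varepsilon\,\mathrm{sign}(v)$; noting that $n_0^{-1/p} = 1$ when $p=\infty$, this agrees with $x^*(y) = x_c + n_0^{-1/p}\varepsilon\,\mathrm{sign}(v)$. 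Translating back through $x = x_c + u$ with $v = \nabla J(y) - \lambda_{\max}(M)(y - x_c)$ yields both claimed expressions.

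The main obstacle I anticipate is the convex-maximization justification: asserting that the supremum is achieved on the extreme points---and, for $p=\infty$, precisely at the cube vertices rather than on lower-dimensional faces---and pairing this with the constancy of $\|u\|_2^2$ there, which is what decouples the quadratic from the optimization. Once that reduction is in hand, the two linear subproblems are routine. A minor edge case worth flagging is $v=0$: for $p=2$ the maximizer is then non-unique and the normalized formula should be read as selecting an arbitrary boundary point, while for $p=\infty$ any sign assignment attains the optimum.
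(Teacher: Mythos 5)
Your proof is correct, and for the $p=2$ case it takes a genuinely different route from the paper's. After the recentering step you perform (writing $x = x_c + u$ so that the shift $y - x_c$ is absorbed into the effective linear coefficient $v = \nabla J(y) - \lambda_{\max}(M)(y-x_c)$ --- a reduction the paper never makes explicit), the paper handles $p=2$ via Lagrangian duality: it invokes strong duality for a one-constraint nonconvex QCQP \cite{park2017general}, writes the KKT stationarity condition, solves for the multiplier from $\|x^*(y)-x_c\|_2 = \varepsilon$, and uses the second-order condition $2\eta^* \geq \lambda_{\max}(M)$ to fix the sign in the normalization. You instead observe that since $\lambda_{\max}(M)\ge 0$ the map $u \mapsto v^\top u + \tfrac{\lambda_{\max}(M)}{2}\|u\|_2^2$ is convex, apply Bauer's maximum principle \cite{bauer1958minimalstellen} to place a maximizer at an extreme point of the feasible set, and exploit the fact that $\|u\|_2^2$ is constant on the extreme points of both the $\ell_2$ ball (the sphere) and the $\ell_\infty$ ball (the vertices), so the quadratic drops out and only a linear maximization remains --- Cauchy--Schwarz for $p=2$, coordinatewise signs for $p=\infty$. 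This is more elementary (no duality, no multiplier bookkeeping), unifies the two norms under a single argument, and is in fact the same device the paper itself deploys later in \Cref{prop:NDProb}; for $p=\infty$ your vertex argument essentially coincides with the paper's coordinatewise endpoint comparison. What the paper's KKT route buys in exchange is generality: it is the template that extends to a full matrix Hessian bound $M$ (as in the appendix treatment of two-layer networks), where $\tfrac{1}{2}\delta^\top M \delta$ is no longer constant on the sphere and your extreme-point reduction breaks down. One degenerate case you and the paper share: when $v=0$ (or a coordinate $v_i=0$ for $p=\infty$) the stated formulas require a sign convention in $\{\pm 1\}$, which you correctly flag and the paper leaves implicit.
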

     See {Appendix \ref{subsec:proofs}} for the proof.

    Given $x^*(y)$, we aim to find the optimal $y$ by solving the optimization problem
    \begin{align}
        \inf_{y \in \mathcal{X}} J(y) \! + \! 
        \nabla J(y)^\top (x^*(y) - y) \! + \!  
        \frac{\lambda_{\max}(M)}{2} \|x^*(y) \! - \! y\|_2^2. \notag
    \end{align}
    This problem, however, is non-convex. In the following proposition, we provide an analytical $y$ that can provably improve the default bound $\bar{J}^1_{\mathcal{X}}(x_c)$. 

     \begin{proposition}\label{prop:heuristicY}
         Let $y=x_c + \eta \hat{\delta}$, where $\hat{\delta} = x^*(x_c) - x_c$ and $p = \infty$. 
         Then we have $\bar{J}^1_{\mathcal{X}}(y) \leq \bar{J}^1_{\mathcal{X}}(x_c)$
         \begin{align}
             \forall \eta \in [0, \min \big(\min_i \frac{|\nabla J(x_c)_i|}{\lambda_{\max}(M) (\|\hat{\delta}\| + |\hat{\delta}_i|)}, 1 \big)]. \notag
         \end{align}

     \end{proposition}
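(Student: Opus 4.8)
The plan is to reduce the claim to a one–dimensional monotonicity statement along the ray $\tau \mapsto x_c + \tau\hat\delta$. Writing $\lambda := \lambda_{\max}(M)$ and invoking the closed form of the maximizer from \Cref{prop:firstOrderatY} (for $p=\infty$, where $n_0^{-1/p}=1$), I have $x^*(y) = x_c + \varepsilon\,\mathrm{sign}(\nabla J(y) - \lambda(y - x_c))$, so that $\hat\delta = x^*(x_c) - x_c = \varepsilon\,\mathrm{sign}(\nabla J(x_c))$, and since the supremum in \eqref{eq:subeq2} is attained at $x^*(y)$,
\[ \bar J^1_{\mathcal X}(y) = J(y) + \nabla J(y)^\top\bigl(x^*(y) - y\bigr) + \frac{\lambda}{2}\bigl\|x^*(y) - y\bigr\|_2^2 . \]
Defining $g(\tau) := \bar J^1_{\mathcal X}(x_c + \tau\hat\delta)$, the goal becomes $g(\eta) \le g(0)$ for $\eta$ in the stated interval.

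The key step, which I expect to be the main obstacle, is to show that the maximizer does not move along the ray, i.e. $x^*(x_c + \tau\hat\delta) = x^*(x_c)$ for every $\tau \in [0,\eta]$. Setting $s := \mathrm{sign}(\nabla J(x_c))$ and $y_\tau := x_c + \tau\hat\delta$, this reduces to the coordinatewise sign condition $s_i\bigl(\nabla J(y_\tau)_i - \lambda\tau\hat\delta_i\bigr) > 0$. I would write $\nabla J(y_\tau) - \nabla J(x_c) = \int_0^1 \nabla^2 J(x_c + t\tau\hat\delta)\,\tau\hat\delta\,dt$ and bound each coordinate by $|\nabla J(y_\tau)_i - \nabla J(x_c)_i| \le \tau\lambda\|\hat\delta\|_2$, using that in the multilayer setting $M = -N = \sup_x\|\nabla^2 J(x)\|_2\,I$, so every row of $\nabla^2 J$ has $\ell_2$-norm at most $\lambda$. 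Since $s_i\nabla J(x_c)_i = |\nabla J(x_c)_i|$ and $|\hat\delta_i| = \varepsilon$, the sign condition is implied by $|\nabla J(x_c)_i| > \lambda\tau\bigl(\|\hat\delta\|_2 + |\hat\delta_i|\bigr)$, which holds for all $\tau \le \eta$ exactly under the stated threshold $\eta \le \min_i |\nabla J(x_c)_i|/\bigl(\lambda(\|\hat\delta\| + |\hat\delta_i|)\bigr)$; the cap $\eta \le 1$ keeps $y_\tau \in \mathcal X$ and ensures $1-\tau \ge 0$ below. Coordinates with $\nabla J(x_c)_i = 0$ force the threshold to $0$ and may be excluded, and $\lambda = 0$ is the trivial affine case.

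With the maximizer frozen at $x^*(x_c) = x_c + \hat\delta$, I get $x^*(y_\tau) - y_\tau = (1-\tau)\hat\delta$, so that $g$ is $C^1$ on $[0,\eta]$ with
\[ g(\tau) = J(y_\tau) + (1-\tau)\,\nabla J(y_\tau)^\top\hat\delta + \frac{\lambda}{2}(1-\tau)^2\|\hat\delta\|_2^2 . \]
Differentiating and using $\frac{d}{d\tau}\,\nabla J(y_\tau)^\top\hat\delta = \hat\delta^\top\nabla^2 J(y_\tau)\hat\delta$, the two copies of $\nabla J(y_\tau)^\top\hat\delta$ cancel, leaving
\[ g'(\tau) = (1-\tau)\,\hat\delta^\top\bigl(\nabla^2 J(y_\tau) - \lambda I\bigr)\hat\delta . \]
Because $1-\tau \ge 0$ and $\nabla^2 J(y_\tau) \preceq M = \lambda I$ on $\mathcal X$ by \eqref{eq: bounds on Hessian}, this derivative is nonpositive, so $g$ is non-increasing and $g(\eta)\le g(0)$, which is exactly $\bar J^1_{\mathcal X}(y) \le \bar J^1_{\mathcal X}(x_c)$; the endpoint $\tau=\eta$ follows from continuity of $g$.

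Everything beyond the invariance step is a direct differentiation, so the crux is genuinely ruling out sign flips of $\nabla J(y) - \lambda(y-x_c)$: the prescribed bound on $\eta$ is precisely the worst-case budget keeping every coordinate's sign intact. In the write-up I would also reconcile the norm in the threshold — the derivation naturally produces $\|\hat\delta\|_2$ from the quadratic term, which should be matched to the unsubscripted $\|\hat\delta\|$ appearing in the statement.
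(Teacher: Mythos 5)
Your proposal is correct, and its crux coincides with the paper's: both hinge on the same invariance lemma showing $x^*(x_c+\tau\hat\delta)=x^*(x_c)$ for all $\tau$ up to the stated threshold, proved the same way (gradient perturbation bounded by $\lambda_{\max}(M)\tau\|\hat\delta\|$, then coordinatewise sign preservation, which is exactly where the threshold $\min_i |\nabla J(x_c)_i|/\bigl(\lambda_{\max}(M)(\|\hat\delta\|+|\hat\delta_i|)\bigr)$ comes from). Where you diverge is the finishing step. The paper keeps things algebraic: it bounds $J(y)$ by its quadratic upper expansion at $x_c$, substitutes into $\bar J^1_{\mathcal{X}}(y)=J(y)+(1-\eta)\nabla J(y)^\top\hat\delta+\tfrac{\lambda_{\max}(M)}{2}(1-\eta)^2\|\hat\delta\|_2^2$, and reduces the claim to $(\nabla J(y)-\nabla J(x_c))^\top\hat\delta\le \eta\,\lambda_{\max}(M)\|\hat\delta\|_2^2$, which follows from Cauchy--Schwarz and Lipschitz continuity of $\nabla J$; this uses only first-order regularity of $\nabla J$. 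You instead differentiate $g(\tau)=\bar J^1_{\mathcal{X}}(x_c+\tau\hat\delta)$ along the ray and obtain $g'(\tau)=(1-\tau)\,\hat\delta^\top(\nabla^2 J(y_\tau)-\lambda_{\max}(M) I)\hat\delta\le 0$ directly from the Hessian bound in \Cref{prop:firstOrderatY}; this needs the $C^2$ smoothness of $J$ (available here by Assumption \ref{asmp:activations}) but yields a slightly stronger conclusion, namely that $\eta\mapsto\bar J^1_{\mathcal{X}}(x_c+\eta\hat\delta)$ is non-increasing on the whole admissible interval, not merely dominated at its endpoint by the value at $x_c$. Your continuity argument for the endpoint $\tau=\eta$ (where sign preservation may be non-strict) and your remark reconciling $\|\hat\delta\|_2$ with the unsubscripted norm in the statement are both sound and, if anything, handle boundary cases more explicitly than the paper does.
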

     See {Appendix \ref{subsec:proofs}} for the proof.

\subsection{Integration with Branching}
In this section, we combine the proposed bounding method with a branching strategy, enabling us to solve \eqref{eq:boundProblem} within an arbitrary accuracy. 
Following \cite{stursberg2003efficient, entesari2023reachlipbnb}, we use a heuristic guided by Principle Component Analysis (PCA) to choose $\mathcal{C}$.

\medskip \noindent
\subsubsection*{Branch and Bound (BnB)} 
%
%
\begin{algorithm}[t]
   \caption{Branch and Bound Algorithm to solve \eqref{eq:boundProblem} for a given direction $c$ and input set $\mathcal{X}_0$}
   \label{alg:BnB}
\begin{algorithmic}
   \STATE {\bfseries Input:} 
   Input set $\mathcal{X}_0 = [\ell_0,u_0] \subset \mathbb{R}^n_0$, neural network $J \colon \mathbb{R}^{n_0} \to \mathbb{R}$, termination threshold $\varepsilon_t>0$. 
   \\
   \STATE {\bfseries Initialize:} $ub = \infty, lb = -\infty$ and 
   $\boldsymbol{\mathcal{X}}= \{\mathcal{X}_0$\}, \\
   \WHILE{$ub - lb > \varepsilon_t $}
   \STATE Choose $\mathcal{X} \in \boldsymbol{\mathcal{X}}$ according to \eqref{eq:branchHeuristic1}.\\
   \STATE Split $\mathcal{X}$ into $\mathcal{X}_{I}$ and $\mathcal{X}_{II}$ according to \eqref{eq:branchUB} or \eqref{eq:branchLA}.\\
   \STATE Replace $\mathcal{X}$ in $\boldsymbol{\mathcal{X}}$ with $\mathcal{X}_{I}$ and $\mathcal{X}_{II}$.\\
   \STATE Compute $\overline{J}_{\hat{\mathcal{X}}}$ and $\underline{J}_{\hat{\mathcal{X}}}$  for $\hat{\mathcal{X}} \in \{\mathcal{X}_{I},\mathcal{X}_{II}\}$ according to \eqref{eq:bnbBounding}. \\
   \STATE update $ub$, and $lb$ according to \eqref{eq:ubLb}.
   \ENDWHILE
\end{algorithmic}
\end{algorithm}
\Cref{alg:BnB} describes our branch and bound method. Given an input set $\mathcal{X}_0 = \lbrack \ell_0, u_0 \rbrack := \{x \mid \ell_0 \leq x\leq u_0\}$, the method recursively partitions $\mathcal{X}_0$ into disjoint sub-rectangles $\boldsymbol{\mathcal{X}} = \{\mathcal{X}_i\}_{i=1}^{N}$ and maintains a global lower and upper bound on $J_{\mathcal{X}_0}^*$ as
\begin{align}\label{eq:ubLb}
    ub &= \max_{\mathcal{X} \in \boldsymbol{\mathcal{X}}} \overline{J}_{\mathcal{X}}, \quad 
        lb = \max_{\mathcal{X} \in \boldsymbol{\mathcal{X}}} \underline{J}_{\mathcal{X}},
\end{align}
where $\underline{J}_\mathcal{X}$ and $\overline{J}_\mathcal{X}$ are any lower and upper bound on $\mathcal{X}$, respectively. In this paper, we have developed two bounding methods. Depending on the size of the sub-rectangle under consideration, one method becomes superior to the other. Therefore, we calculate both bounds for a subset $\mathcal{X}$ and choose the superior one. Thus, 
\begin{align}\label{eq:bnbBounding}
    \underline{J}_\mathcal{X} = \max(\underline{J}^0_\mathcal{X}, \underline{J}^1_\mathcal{X}), \quad
    \overline{J}_\mathcal{X} = \min(\overline{J}^0_\mathcal{X}, \overline{J}^1_\mathcal{X}).
\end{align}
We note that computing $\overline{J}^0_\mathcal{X}$ incurs no additional cost, as computing $\overline{J}^1_\mathcal{X}$ already subsumes all the required calculations. 

To refine the global bounds in \eqref{eq:ubLb}, the algorithm selects a sub-rectangle that admits the largest upper bound,
\begin{align}\label{eq:branchHeuristic1}
    \mathcal{X} = \arg \max_{\mathcal{X} \in \boldsymbol{\mathcal{X}}} \overline{J}_{\mathcal{X}}.
\end{align}
Then, it chooses a coordinate index $1 \leq j \leq n_0$, based on a given heuristic, to split the chosen subset $\mathcal{X}$ into two disjoint subsets $\mathcal{X}_{I}(j)$ and $\mathcal{X}_{II}(j)$ such that $\mathcal{X} = \mathcal{X}_{I}(j) \cup \mathcal{X}_{II}(j)$ and use the bounds on these new subsets to potentially improve the overall bound on the objective function. 
In this work, we use two strategies to choose which axis to split. The first strategy is choosing the dimension with the longest axis of the hyper-rectangle and splitting that into several new nodes as it creates the maximum decrease in $\mathrm{diam}(\mathcal{X})$. 
\begin{align}\label{eq:branchLA}
    j^* = \arg \max_j (u_j - \ell_j),
\end{align}
where $\mathcal{X} = \lbrack \ell, u \rbrack$. \cite{boyd2007branch} proves the convergence of this strategy.
The second strategy involves splitting the set along all possible axes and selecting the one that yields the best upper bound for the main node \cite{bunel2018unified}.
\begin{align}\label{eq:branchUB}
    j^* = \argmin_j \max(\overline{J}_{\mathcal{X}_I}(j), \overline{J}_{\mathcal{X}_{II}}(j)).
\end{align}

 Following the standard arguments in branch and bound methods \cite{boyd2007branch, entesari2023reachlipbnb}, the lower bound $lb$ produces a non-increasing sequence and the upper bound $ub$ produces a non-decreasing sequence on $J^*_{\mathcal{X}_0}$. 
The algorithm will terminate once it satisfies some termination criterion, such as $ub - lb \leq \varepsilon_t. $

\medskip \noindent

\begin{remark}
    As previously mentioned, we use $\overline{J}_\mathcal{X} = \min(\overline{J}^0_\mathcal{X}, \overline{J}^1_\mathcal{X})$ to compute the upper bound on each sub-rectangle.
    Based on \Cref{prop:bndCmp}, 
    as the algorithm progresses and the sub-rectangles shrink, the first-order bound eventually becomes smaller than the zeroth-order bound, speeding up the convergence of the algorithm towards the end. 
\end{remark}


\smallskip \noindent
\subsubsection*{Zonotope Input Sets}
    The proposed methods in this paper rely on the input set $\mathcal{X}$ being a hyper-rectangle.
    However, we can also consider \textit{zonotopes} \cite{entesari2023automated}, which are defined as 
    $\mathcal{X} = \{Gz + x_c | \|z\|_\infty \leq 1\}$.
    The mapping from $z$ to $x$ can be interpreted as an additional affine layer with weight $G$ and bias $x_c$ that can be appended to the neural network $J$.  
    %
    Since $z$ is still restricted to a hyper-rectangle, we can apply our BnB framework by solving the equivalent problem $$\max_{\|z\|_{\infty} \leq 1} J(Gz+x_c).$$
    

\begin{figure}[t!]
    \centering
    \includegraphics[width= 0.4\textwidth]{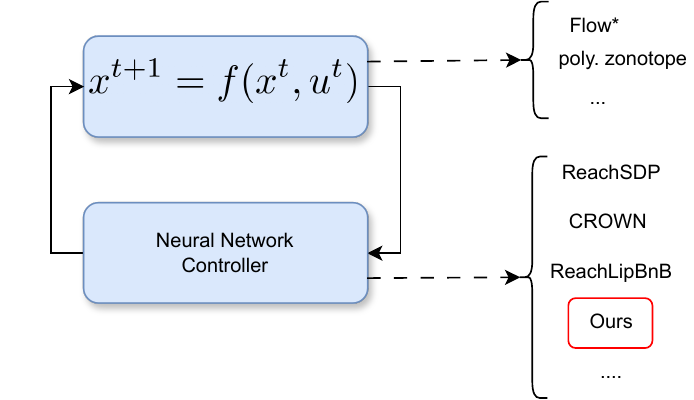}
    \caption{Closed-loop system reachability}
    \label{fig:closedloopreach}
\end{figure}
\smallskip
\begin{remark}
    The method discussed so far is tailored to perform reachability analysis on neural networks. To generalize this setting to reachability analysis of neural network-controlled systems, our method can be coupled with existing reachability algorithms for nonlinear systems.
    Figure \ref{fig:closedloopreach} illustrates how our method can be used in the closed-loop setting.
\end{remark}
\section{Experiments}\label{sec:NumRes}
In this section, we compare the performance of our algorithm with the state-of-the-art under various setups.  
Following \cite{entesari2023reachlipbnb}, the rotation matrix required for the non-axis aligned set representation has been calculated using PCA on $10K$ sample trajectories randomly chosen and passed through the networks. For all the tables, the mean and standard deviations are reported over 5 trials.
The memory termination threshold for ReachLipBnB \cite{entesari2023reachlipbnb} was set to $20K$ active branches.
The experiments are conducted on an Intel Core i9-10980XE 4.8 GHz processor with 64 GB of RAM. 
The codes for comparison in this section were extracted from \cite{entesari2023reachlipbnb}\footnote{\href{https://github.com/o4lc/ReachLipBnB}{https://github.com/o4lc/ReachLipBnB}}, \cite{fazlyab2024certified}\footnote{https://github.com/o4lc/CRM-LipLT},
and \cite{kochdumper2023open}\footnote{\href{https://codeocean.com/capsule/8237552/tree/v1}{https://codeocean.com/capsule/8237552/tree/v1}}.

\begin{figure*}[t!]
\centering
\begin{subfigure}{0.33\textwidth}
    \centering
    \includegraphics[width=0.9\columnwidth, trim={.5cm 0cm 1cm 0cm}]{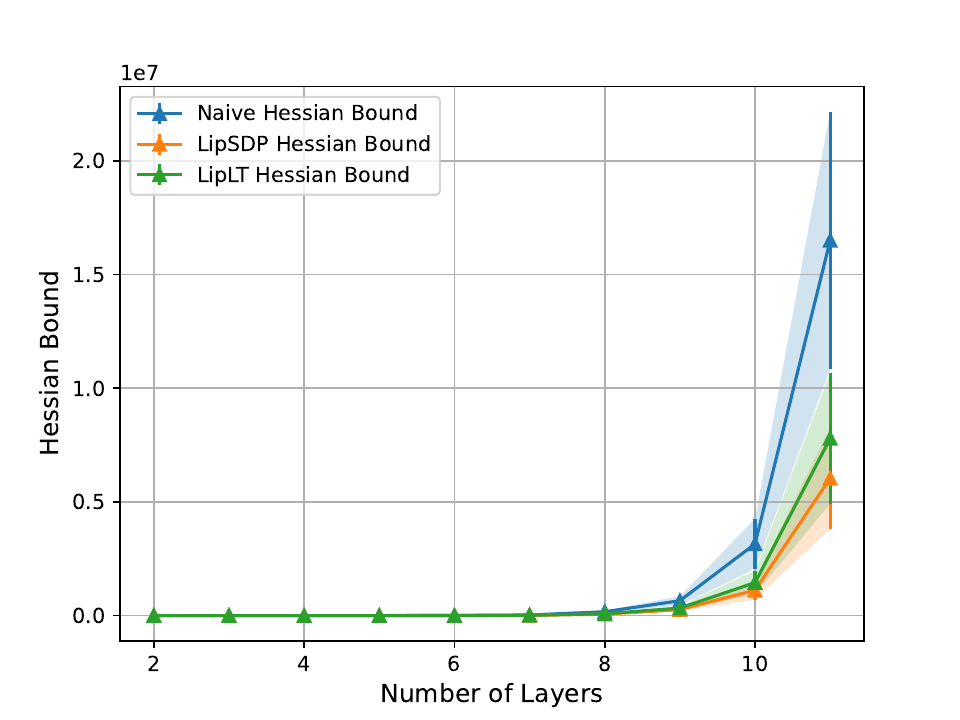}
\end{subfigure}%
\begin{subfigure}{0.33\textwidth}
    \centering
    \includegraphics[width=0.9\columnwidth, trim={.5cm 0cm 1cm 0cm}]{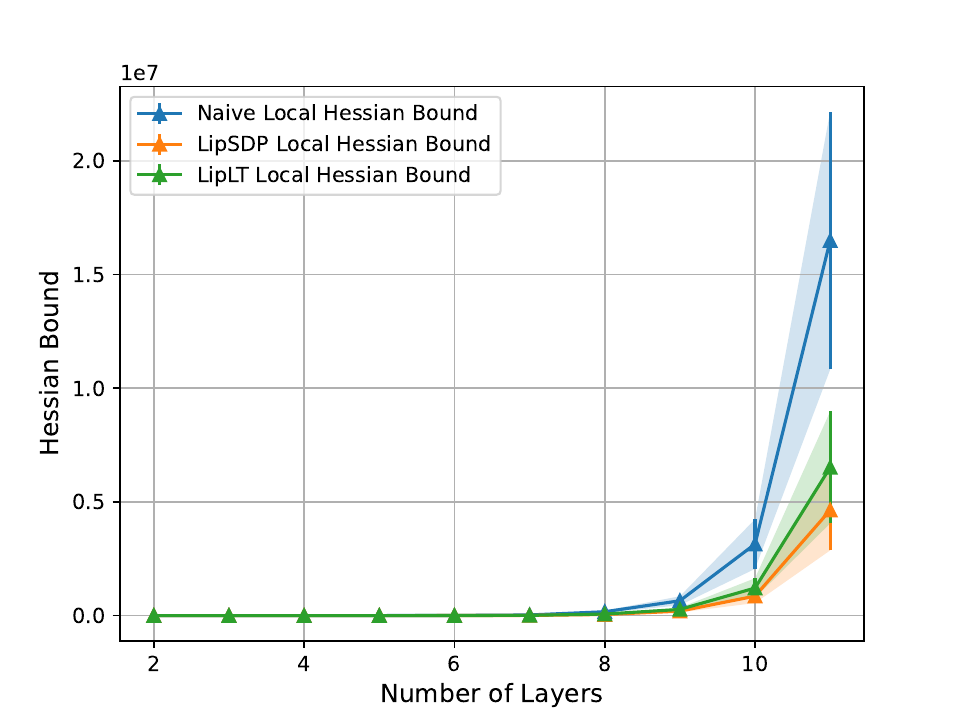}
\end{subfigure}%
\begin{subfigure}{0.33\textwidth}
    \centering
    \includegraphics[width=0.9\columnwidth, trim={.5cm 0cm 1cm 0cm}]{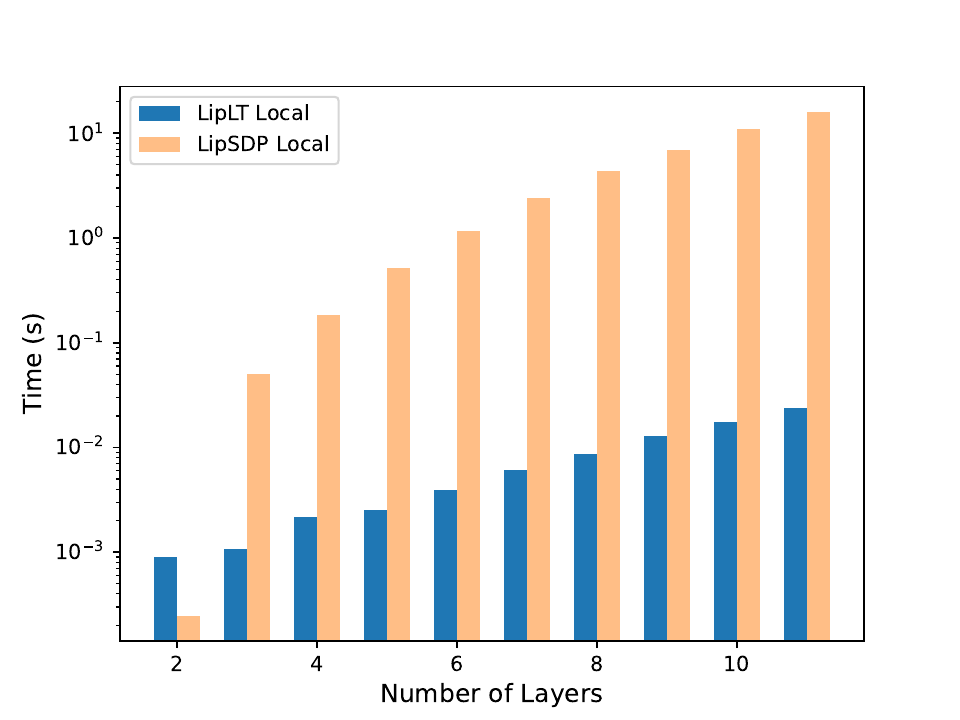}
\end{subfigure}
    \caption{Hessian upper bound calculated by different global (left) and local (middle) methods and time comparison (right) per number of layers.}
    \label{fig:hessianBound}
\end{figure*}

\vspace{-2mm}
\subsection{Numerical Bounds on Hessian}
To empirically assess the performance of local LipLT, we evaluate the Hessian bounds derived from various Lipschitz constant estimation methods. In this experiment, we compare LipSDP, LipLT, and the naive method.
Figure \ref{fig:hessianBound} depicts the Hessian bounds and provides a time comparison. It is evident from the figure that local LipLT offers a compromise between the speed of the naive method and the accuracy of LipSDP.
To produce Figure \ref{fig:hessianBound}, we created 200 random networks with layer depths varying from 2 to 11, and we computed the statistics of their Hessian bounds. All networks shared the same input and output dimensions, $\mathbb{R}^6 \mapsto \mathbb{R}^3$, and featured 32 neurons in each hidden layer.


\vspace{-1mm}
\subsection{Open-loop Reachability Analysis}
We compare our method against \cite{singh2018fast, ivanov2021verisig, kochdumper2023open} on an intentionally small random network. 
Figure \ref{fig:randNet} shows the reachable sets of various methods on a random network with architecture $2 \times 50 \times 2$ and activation function of $\tanh$ and Sigmoid with input set $[-1, 1]\times[-1, 1]$. 
We run our method under two different settings: one uses the standard unit vectors to represent the direction vectors $c$, and the other uses 16 uniformly spaced vectors. 
To investigate the effect of the number of layers, we conduct the same experiment on three randomly generated networks with $\tanh$  activations and with 1, 2, and 3 hidden layers.
Figure \ref{fig:randNetExp} compares the performance of our method with the state-of-the-art.

\begin{figure}[t!]
    \centering
    \begin{subfigure}[t]{0.24\textwidth} 
    \includegraphics[width=0.99\textwidth, trim={0.5cm 0cm 0.5cm 0cm}]{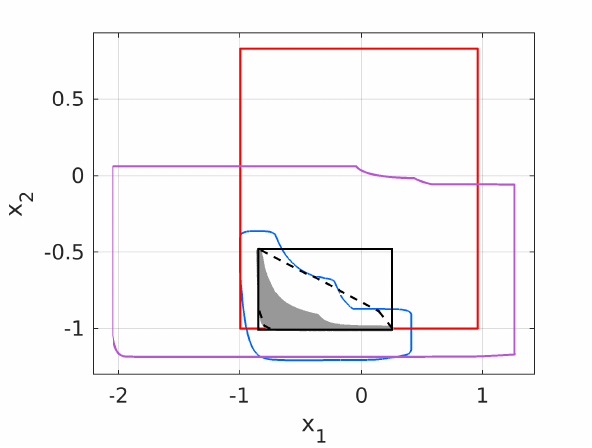}
     \end{subfigure}
    \begin{subfigure}[t]{0.24\textwidth} \includegraphics[width=0.99\textwidth, trim={0.5cm 0cm 0.5cm 0cm}]{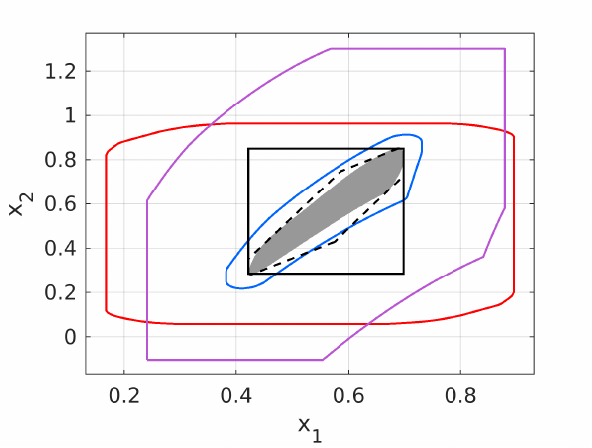}
     \end{subfigure}
    \caption{Reachable sets computed with zonotopes (red), Taylor models (purple), polynomial zonotopes (blue), our method with coordinate directions (solid black), and 16 uniformly chosen directions (dashed black) for two randomly initialized $\tanh$ (left) and Sigmoid (right) networks.}
    \label{fig:randNet}
\end{figure}

\begin{figure*}[t!]
    \centering
    \begin{subfigure}{0.33\textwidth}
        \centering
        \includegraphics[width=0.8\columnwidth, trim={0cm 0cm 1cm 0cm}]{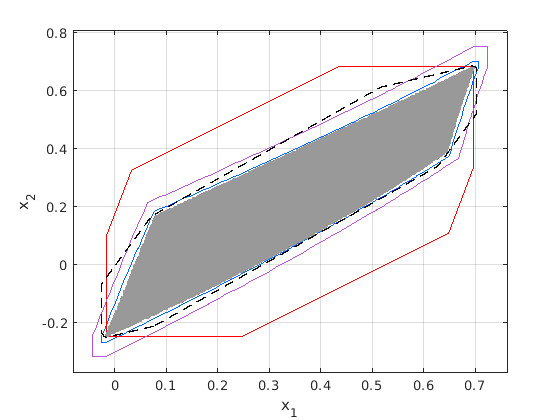}
    \end{subfigure}%
    \begin{subfigure}{0.33\textwidth}
        \centering
        \includegraphics[width=0.8\columnwidth, trim={0cm 0cm 1cm 0cm}]{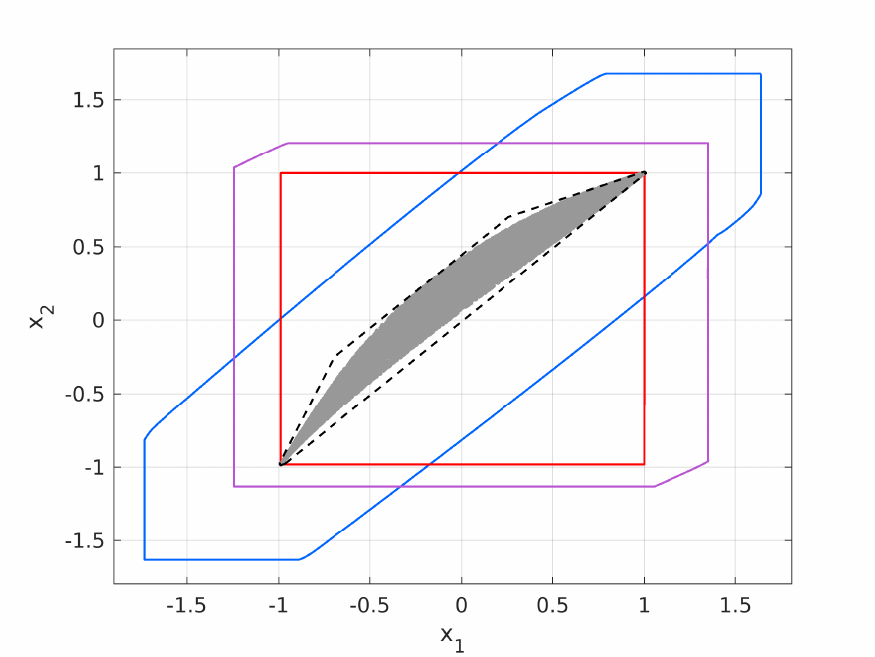}
    \end{subfigure}%
    \begin{subfigure}{0.33\textwidth}
        \centering
        \includegraphics[width=0.8\columnwidth, trim={0cm 0cm 1cm 0cm}]{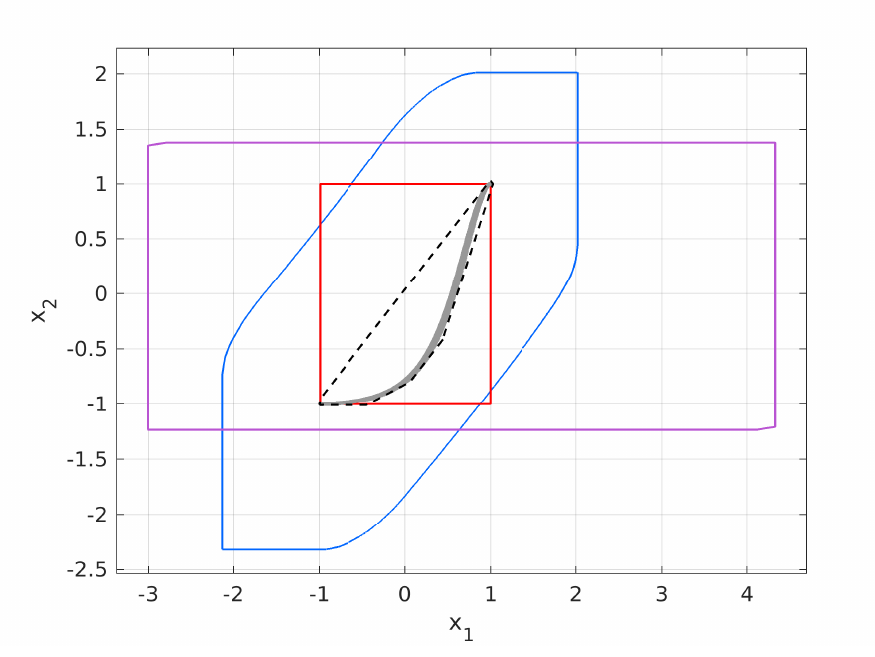}
    \end{subfigure}
    \caption{Reachable sets computed with zonotopes (red), Taylor models (purple), polynomial zonotopes (blue), our method with 16 uniformly chosen direction (dashed black) for a single layer (left), two-layer (middle), three-layer (right) randomly initialized $\tanh$ networks.}
    \label{fig:randNetExp}
\end{figure*}


\vspace{-1mm}
\subsection{Closed-Loop Reachability Analysis} 
We now consider the task of reachability analysis on a closed-loop system. Consider the control system
\begin{align}
    x^{t+1} = f(x^t) = Ax^{t} + Bu^t, \quad u^t = \pi(x^t)\notag
\end{align}
where $x^{t} \in \mathbb{R}^{n_x}$ is the state at time $t$, $u^t=\pi(x^t) \in \mathbb{R}^{n_u}$ is a neural network control policy, and $A \in \mathbb{R}^{n_x \times n_x}$ and $B \in \mathbb{R}^{n_x \times n_u}$. 
To adapt our method to this setup, we absorb $B$ into the network's last layer, similar to \cite{entesari2023reachlipbnb}, and calculate the Lipschitz and Hessian bounds accordingly.
We now discuss the neural network-controlled benchmarks.


 %
 \begin{figure}[t!]
\centering
\begin{subfigure}{0.24\textwidth}
    \centering
    \includegraphics[width=0.99\columnwidth, trim={.5cm 0cm 1cm 0cm}]{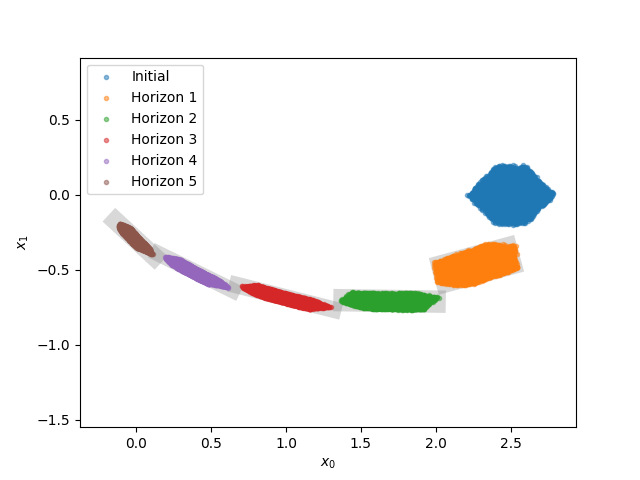}
    \caption{}
    \label{fig:DIcompare}
\end{subfigure}
\begin{subfigure}{0.24\textwidth}
    \centering
    \includegraphics[width=0.99\columnwidth,  trim={1cm 0cm 0.5cm 0cm}]{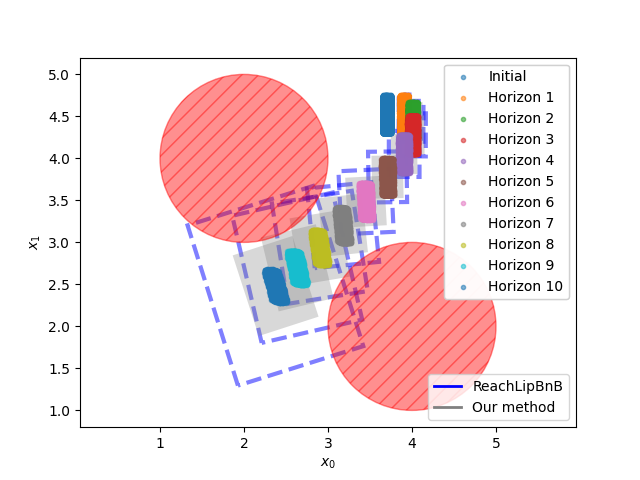}
    \caption{}
    \label{fig:6Qcompare}
\end{subfigure}%
    \caption{\textbf{(a)} Reachability analysis for the Double Integrator on a hexagon input set for five time-steps with $\varepsilon_t = 10^{-3}$. \textbf{(b)} Reachability analysis for the 6D Quadrotor system for 10 time-steps and comparing the results with ReachLipBnB with $\varepsilon_t = 10^{-2}$. 
    }
    \label{fig:reachabilityCompare}
\end{figure}

\smallskip
\noindent
\textbf{Double Integrator (DI):}
    The first closed-loop benchmark is the discrete-time Double Integrator system \cite{hu2020reach},
    \begin{align}
        x^{t+1} = 
            \begin{bmatrix}
            1 & 1\\
            0 & 1
            \end{bmatrix} x^{t} +
            \begin{bmatrix}
            0.5\\
            1
            \end{bmatrix} u^{t}, \notag
    \end{align}
    The control policy is a fully-connected neural network with an architecture of $2 \times 10 \times 5 \times 5 \times 1$ and a $\tanh$ activation function, trained to mimic a Model Predictive Control (MPC) policy.
    The initial input set is designed to be a hexagon centered at $x_c  =  [2.5, 0]^\top$ which can be represented as 
    $\mathcal{X} = \lbrack \begin{bmatrix}
        0.1 & 0.1 & 0.1 \\ 
        -0.1 & 0.0 & 0.1
    \end{bmatrix} 
    z + 
    \begin{bmatrix}
        2.5\\0
    \end{bmatrix} \mid 
    \|z\|_\infty \leq 1 \rbrack$. 
    Figure \ref{fig:DIcompare} shows the reachable set of the system for 5 time steps.

\smallskip \noindent
\textbf{6D Quadrotor (6Q):}
    The second closed-loop benchmark is the linearized 6D Quadrotor system \cite{lopez2019arch, hu2020reach}, which can be represented in the following form,
    \begin{align}
        x^{t+1}\!=\! 
            &\begin{bmatrix}
            I_{3\times3} & I_{3\times3} \times \Delta t \\
            0_{3\times3} & I_{3\times3}
            \end{bmatrix} x^{t} + \notag \\
            &\Delta t\!\times\!\begin{bmatrix}
            & g & 0 & 0 \\
            0_{3\times3} & 0 & -g & 0 \\
            & 0 & 0 & 1
            \end{bmatrix}^\top\!u^{t}\!  
            +\begin{bmatrix}
                0_{5\times1} \\
                -g \times \Delta t
            \end{bmatrix}, \notag
    \end{align}
    where $\Delta t = 0.1$, is the sampling time, and $g = 9.81$ is the gravitational constant.\\
    In this experiment, the MPC is charged with the task of stabilizing the quadrotor to the origin while avoiding two spherical obstacles. Then, a fully connected neural network with architecture $6 \times 32 \times 32 \times 3$ and $\tanh$ activation was trained using the data generated from the aforementioned MPC. 
    The obstacles are two identical unit spheres centered at $[2, 4, 3]^\top$ and $[4, 2, 3]^\top$. 
    Figure \ref{fig:6Qcompare} demonstrates that our method successfully verifies collision avoidance, while ReachLipBnB fails to do so, due to the memory threshold. This showcases the improvement achieved by using curvature information for bounding, rather than solely relying on the Lipschitz constant.

In the rest of this section, we will use the aforementioned benchmarks to conduct additional evaluations and comparisons.
\subsubsection{First vs Second Order Reachability}
We compare the run time and the number of branches between the zeroth-order method (ReachLipBnB \cite{entesari2023reachlipbnb}) and the first-order method (Ours). Table \ref{tab:firstVsSecond} shows that first-order bounds drastically reduce the number of branches and the overall run time. 
The Lipschitz constants required for our method were calculated using local LipLT, while for ReachLipBnB they were calculated using LipSDP.
\begin{table}[t]
\setlength{\tabcolsep}{2pt}
\centering
\resizebox{.37\textwidth}{!}
{%
    \begin{tabular}{ c  c  c  c  c } 
    \hline
       & $\varepsilon_t$ & Method & Branches  & Run Time \\ 
     \hline
    \multirow{2}{*}{DI} & \multirow{2}{*}{$10^{-2}$} & Zeroth & 1.9k $\pm$ 0.1k  & 0.95 $\pm$ 0.25 \\ 
            &&First & \textbf{0.6k} $\pm$ 0.06k   & \textbf{0.4} $\pm$ 0.01 \\ 
     \hline
     \multirow{2}{*}{6Q}& \multirow{2}{*}  {$10^{-2}$} & Zeroth & 4.89M $\pm$ 36k  & 678.5 $\pm$ 22.7 \\ 
            &&First & \textbf{0.93M} $\pm$ 16k  & \textbf{385.8} $\pm$ 12 \\ 
     \hline
    \end{tabular}%
}
\caption{Time and memory comparisons between the zeroth and first-order bounding method for networks with $\tanh$ activation function.}
\label{tab:firstVsSecond}
\end{table}

\subsubsection{Improved Lipschitz and Hessian}
We now compare the time/memory trade-off in Table \ref{tab:secondVsSecond}. 
LipSDP yields better bounds on Hessian which improves the bounds of each node, therefore convergence can be achieved with less branching, at the cost of potentially higher run time due to the computation requirements of LipSDP.
Local \LipMethodName on the other hand, requires negligible computation time. 
Therefore, it might achieve better overall run time, in comparison to LipSDP.

\begin{table}[t]
\setlength{\tabcolsep}{2pt}
\centering
\resizebox{.4\textwidth}{!}
{%
    \begin{tabular}{ c  c  c  c  c } 
    \hline
       & $\varepsilon_t$ & Lipschitz & Branches  & Run Time (Lip.) \\ 
     \hline
     \multirow{6}{*}{DI}&\multirow{3}{*}{$10^{-2}$} &
                                    Naive & 2.4k $\pm$ 53  & 0.94 $\pm$ 0.26 (0.008) \\
                                && \LipMethodName & 0.67k $\pm$ 63  & \textbf{0.53} $\pm$ 0.26 (0.02) \\
                                &&LipSDP & \textbf{0.63k} $\pm$ 33  & 1.36 $\pm$ 0.29 (0.8)\\
                            
     \cmidrule{2-5} 
                & \multirow{3}{*}{$10^{-3}$} & Naive & 4.1k $\pm$ 0.1k  & 1.39 $\pm$ 0.27 (0.008) \\
                                && \LipMethodName & \textbf{2.2k} $\pm$ 0.1k  & \textbf{1.03} $\pm$ 0.2 (0.02) \\
                                &&LipSDP & \textbf{2.2k} $\pm$ 0.08k  & 1.83 $\pm$ 0.2 (0.8)\\
     \hline
     
     \multirow{6}{*}{6Q}&  \multirow{3}{*}{$10^{-2}$} 
                        & Naive & 1.67M $\pm$ 64k & 710.6 $\pm$ 30 (0.06)\\
                        && \LipMethodName & 0.93M $\pm$ 16k & 375.1 $\pm$ 5.13 (0.16)\\
                         &&LipSDP & \textbf{0.46M} $\pm$ 24k  & \textbf{181.4} $\pm$ 9.6 (7.4) \\ 

     \cmidrule{2-5} 
     & \multirow{3}{*}{$10^{-3}$}
                        & Naive & 2.3M $\pm$ 42k & 954.2 $\pm$ 20 (0.06)\\
                        && \LipMethodName & 1.7M $\pm$ 96k  & 662.5 $\pm$ 39 (0.17) \\ 
                         &&LipSDP & \textbf{1.1M} $\pm$ 39k  & \textbf{454.5} $\pm$ 16 (7.5) \\ 
     \hline
    \end{tabular}%
}
\caption{Performance comparison between various Lipschitz calculation methods.}
\label{tab:secondVsSecond}
\end{table}

\subsubsection{Branching Heuristics}
    In this experiment, we compare different branching heuristics for the double integrator example. Table \ref{tab:branchHeuristic} shows that using the best upper bound heuristic (\textit{Best UB}) requires more computation per branch, which slows down the overall algorithm. 
    However, this approach could potentially reduce the total number of the required branches.
    \begin{table}[t]
    \setlength{\tabcolsep}{2pt}
    \centering
    \resizebox{.35\textwidth}{!}
    {%
        \begin{tabular}{c  c  c  c  c } 
        \hline
           & $\varepsilon_t$ & Heuristic & Branches  & Run Time \\ 
         \hline
         \multirow{4}{*}{DI}  \quad & \multirow{2}{*}{$10^{-2}$} 
            & Max Length & \textbf{678.8} $\pm$ 63  & \textbf{0.52} $\pm$ 0.26 \\ 
            &&Best UB & 714 $\pm$ 53  & 1.74 $\pm$ 0.22 \\ 
         \cmidrule{2-5}
         & \multirow{2}{*}{$10^{-3}$} 
                &Max Length & 2.25k $\pm$ 0.1k  & \textbf{1} $\pm$ 0.22 \\ 
                &&Best UB & \textbf{2.24k} $\pm$ 0.1k  & 5.01 $\pm$ 0.16 \\ 
         \hline
        \end{tabular}%
    }
    \caption{Comparison between different branching heuristics.}
    \label{tab:branchHeuristic}
    \end{table}
    
\section{Conclusion}
We proposed a novel method to compute provable upper bounds on the second derivative of continuously differentiable neural networks. We used loop transformation to exploit the monotonicity of the activation functions in deriving relatively accurate bounds.  We then derived a derivative-preserving abstraction of the neural network model using Taylor expansion and bounding the Lagrangian remainder. Using this abstraction, we developed a branch-and-bound scheme to perform reachability analysis on neural networks using polyhedrons as reachable set representation. 
We finally conducted numerous experiments to empirically validate the utility of our method. 
\\
In future works, we will explore coupling a state-of-the-art nonlinear system reachability method with ours to achieve even better results. Furthermore, using curvature information of neural networks can also be applied to problems beyond reachability analysis, such as adversarial robustness. 
%
\section{Acknowledgement}
This work was partially funded by the Johns Hopkins Mathematical Institute for Data Science (MINDS).
\section{Appendix}\label{sec:appendix}
\subsection{Proofs}\label{subsec:proofs}
\subsubsection*{Proof of \Cref{lemma:singlelaterLT}} \ \\
\begin{proof}
    Consider \eqref{eq:liplt2layer}. The Lipschitz constant of the linear term $x \mapsto W^{(2)} D W^{(1)} x$ is $\|W^{(2)} D W^{(1)}\|_p$. To bound the Lipschitz constant of the nonlinear term $W^{(2)} \psi(W^{(1)}x;d)$,
    we first compute a Lipschitz constant for $\psi(W^{(1)}x;d)$ as follows,
    \begin{align}
        &\|\psi(W^{(1)}x;d)  \!-\!  \psi(W^{(1)}y;d)\|^p_p
         = \notag \\
        &\qquad \sum_{i=1}^{n_1} | \psi_i(W_i^{(1)\top} \! x;d) \! - \! \psi_i(W_i^{(1)\top} \! y;d)|^p \! \leq \notag \\
        &\qquad \sum_{i=1}^{n_1} | \max(|\beta_{i}  -  d_i|, |d_i \! - \! \alpha_{i}|) W_i^{(1)\top} \! (x - y) |^p = \notag \\
        &\qquad \|\mathrm{diag}(\max(|\beta - d| , |d - \alpha|)) W^{(1)} (x-y)\|^p_p \leq \notag \\
         &\qquad \|\mathrm{diag}(\max(|\beta - d| , |d - \alpha|)) W^{(1)}\|^p_p
        \|x-y\|^p_p.\notag
    \end{align}
    In the first inequality, we used the fact that $\psi_i \in \mathrm{slope}(\alpha_{i} - d_i, \beta_{i} - d_{i})$ after loop transformation.
    Thus, a Lipschitz constant for $W^{(2)} \psi(W^{(1)}x;d)$ is simply
    $\|W^{(2)}\|_p \|\mathrm{diag}(\max(|\beta - d| , |d - \alpha|)) W^{(1)}\|_p$.
\end{proof}

\medskip 
\subsubsection*{Proof of \Cref{prop:LT}} \ \\
%
\begin{proof}
    Substituting $D^{(l)} = \mathrm{diag}(\frac{\beta^{(l)}}{2})$ in \eqref{eq:LoopTransLip}, we obtain,
    \begin{align}\label{eq:LocalLTproof1}
        &m^{(l)}(\mathcal{D}^{(l)}) = \|\mathrm{diag}(\frac{\beta^{(l)}}{2}) W^{(l)} \prod_{i=1}^{l-1} \mathrm{diag}(\frac{\beta^{(i)}}{2}) W^{(i)}\| + \notag\\
         &\sum_{j=1}^{l-1} \| \mathrm{diag}(\frac{\beta^{(l)}}{2})W^{(l)} \prod_{i=j+1}^{l-1} \! \mathrm{diag}(\frac{\beta^{(i)}}{2}) W^{(i)}\| \times m^{(j)}(\mathcal{D}^{(j)}).  
    \end{align}
    On the other hand, the local naive method can be expressed by the following recursion.
    \begin{align}
        m^{(l)}(0)&= 
        \|\beta^{(l)} W^{(l)}
        \underbrace{\|\prod_{i=1}^{l-1} \|(\mathrm{diag}(\beta^{(i)})) W^{(i)} \|}_{m^{(l-1)}(0)}. \notag 
    \end{align}
    Now, we use induction to prove the proposition.
    The base case $m^{1}(\mathcal{D}^{(1)}) \leq \frac{1}{2}  m^{(1)}(0)$ is trivial.
    %
    We now assume that $m^{(i)}(\mathcal{D}^{(i)}) \leq \frac{1}{2} m^{(i)}(0)$ holds for all $i \leq l-1,$ and prove that the same inequality for $i=l$. 

    We first note that the first term in \eqref{eq:LocalLTproof1} is less than or equal to $\frac{m^{(l)}(0)}{2^{l}}$, by sub-multiplicative property of the matrix norm:
    \begin{align}
        \|\mathrm{diag}(\frac{\beta^{(l)}}{2}) W^{(l)} 
        \prod_{i=1}^{l-1} \mathrm{diag}(\frac{\beta^{(i)}}{2}) W^{(i)}\|
        \leq \notag \\
        \frac{1}{2^l} \prod_{i=1}^{l} \|(\mathrm{diag}(\beta^{(i)})) W^{(i)} \|. \notag
    \end{align}
    Using the same property, one can also show that each one of the terms inside of the summation in \eqref{eq:LocalLTproof1} is less than or equal to $\frac{m^{(l)}(0)}{2^{l-j+1}}$:
    \begin{align}
        &\| \mathrm{diag}(\frac{\beta^{(l)}}{2})W^{(l)} \prod_{i=j+1}^{l-1} \! \mathrm{diag}(\frac{\beta^{(i)}}{2}) W^{(i)}\| \times m^{(j)}(\mathcal{D}^{(j)}) \leq \notag \\
        &\frac{1}{2^{l-j}} \| \mathrm{diag}(\beta^{(l)})W^{(l)} \prod_{i=j+1}^{l-1} \! \mathrm{diag}(\beta^{(i)}) W^{(i)}\| \times m^{(j)}(\mathcal{D}^{(j)})\notag \leq \\
        &\frac{1}{2^{l-j + 1}} \underbrace{\prod_{i=j+1}^{l} \! \| \mathrm{diag}(\beta^{(i)}) W^{(i)}\| \times m^{(j)}(0)}_{m^{(l)}(0)} = \frac{m^{(l)}(0)}{2^{l-j+1}}. \notag
    \end{align}
    where in the second inequality we have used the induction hypothesis $m^{(j)}(\mathcal{D}^{(j)}) \leq \frac{1}{2} m^{(j)}(0)$.
    Therefore, we can write 
    \begin{align}
       m^{(l)}(\mathcal{D}^{(l)}) \leq m^{(l)}(0) ( 2^{- l} + \sum_{j=1}^{l-1} 2^{j-l-1}) 
       = \frac{1}{2} m^{(l)}(0). \notag
    \end{align}
    We have shown that $m^{(l)}(\mathcal{D}^{(l)}) \leq \frac{1}{2} m^{(l)}(0) $ for $l=2, \cdots L-1$.
    Finally, for $l = L$, we have $D'^{(L)} = 1$, thus we can write
    \begin{align}\label{eq:LocalLTproof2}
        &m^{(L)}(\mathcal{D}^{(L)}) = \| W^{(L)} \prod_{i=1}^{L-1} \mathrm{diag}(\frac{\beta^{(i)}}{2}) W^{(i)}\| + \notag\\
         &\sum_{j=1}^{L-1} \|W^{(L)} \prod_{i=j+1}^{L-1} \! \mathrm{diag}(\frac{\beta^{(i)}}{2}) W^{(i)}\| \times  
        m^{(j)}(\mathcal{D}^{(j)}). 
    \end{align}
    Hence similar to the proof of the induction, we can show
    \begin{align}
       m^{(L)}(\mathcal{D}^{(L)}) &\leq m^{(L)}(0), \notag
    \end{align}
    which proves the proposition.
\end{proof}

\medskip 
\subsubsection*{Proof of Proposition \ref{prop:lInfBound}} 
    We use induction to prove the proposition.
    
    \begin{proof}
        For the base of induction, using \eqref{eq:NNdef} we get $z^{(L)} = W^{(L)}a^{(L-1)}$. For two points $x, y$ we have
        \begin{align}
         |z^{(L)}(x) - z^{(L)}(y)| &\leq |W^{(L)}||a^{(L-1)}(x) - a^{(L-1)}(y)| \notag \\
         &= S^{(L, L-1)}|a^{(L-1)}(x) - a^{(L-1)}(y)|. \notag
        \end{align}
        Then assuming that $|z^{(L-1)}(x) - z^{(L-1)}(y)| \leq S^{(L-1, l)} |a^{(l)}(x) - a^{(l)}(y)|$ holds, we aim to prove $|z^{(L)}(x) - z^{(L)}(y)| \leq S^{(L, l)} |a^{(l)}(x) - a^{(l)}(y)|$.
        Starting from $z^{(L)} = W^{(L)}\sigma(z^{(L-1)})$, using the Lipschitz continuity of $\sigma$ and the induction hypothesis
        \begin{align}
         |z^{(L)}(x) &- z^{(L)}(y)| \leq |W^{(L)}||\sigma(z^{(L-1)}(x)) - \sigma(z^{(L-1)}(y))| \notag \\
         &\leq 
         |W^{(L)}| \mathrm{diag}(\beta^{(L-1)})|(z^{(L-1)}(x) - z^{(L-1)}(y)| \notag \\
         &\leq 
         |W^{(L)}| \mathrm{diag}(\beta^{(L-1)}) S^{(L-1, l)}|a^{(l)}(x) - a^{(l)}(y)| \notag \\
         &= S^{(L, l)}|a^{(l)}(x) - a^{(l)}(y)|.
         \notag
        \end{align}
        And the proof is complete.
    \end{proof}

\medskip 
\subsubsection*{Proof of \Cref{prop:firstOrderatY}} \ \\
\begin{proof}
        We solve \eqref{eq:subeq2} exactly for $p=2$ and $p=\infty$, and then propose a relaxed solution for $p \notin \{2,\infty\}$.
        Starting from $p=2$ and assuming $\lambda_{\max}(M) \geq 0$, the optimization problem is a non-convex QCQP with only one inequality constraint. Thus, strong duality holds \cite{park2017general}. The Lagrangian is 
        %
        %
        \begin{align}
            \mathcal{L}(x, \eta)  = \nabla J(y)^\top (x - y) + \frac{\lambda_{\max}(M)}{2} \|(x - y)\|_2^2 -\notag \\ 
            \eta (\|x-x_c\|_2^2 - \varepsilon^2). \notag
        \end{align}
        The KKT conditions are, 
        \begin{align}
            &\begin{cases}
                \nabla J(y) + \lambda_{\max}(M) (x^*(y) - y) -  2\eta^* (x^*(y)-x_c) = 0\\
                \|x^*(y)-x_c\|_2 = \varepsilon,
            \end{cases} \notag
        \end{align}
        where we have used Bauer's maximum principle \cite{bauer1958minimalstellen}, which states that the solution of a convex maximization problem lies on the boundary $\|x^*(y) - x_c\|_2 = \varepsilon$. 
        From the first condition, we obtain
        \begin{align}
                x^*(y) - x_c = \frac{-\nabla J(y) + \lambda_{\max}(M) ( y - x_c)}{-2\eta^* + \lambda_{\max}(M)} .\notag
        \end{align}
        By substituting the above expression in $\|x^*(y) - x_c\|_2 = \varepsilon$, one can obtain.
        \begin{align}
                | -2\eta^* + \lambda_{\max}(M) |=
                \frac{\|-\nabla J(y) + \lambda_{\max}(M) ( y - x_c)\|_2}{\varepsilon} .\notag
        \end{align}
        Noting that we are seeking a maximizer, we must have $\nabla^2_x \mathcal{L}(x^*(y), \eta^*) \leq 0$, implying $2\eta^* \geq \lambda_{\max}(M)$. 
        Therefore, by solving for $\eta^*$, we obtain $x^*(y)$ as desired. 
        %
        
        Next, we consider $p = \infty$, for which the problem becomes separable. We can write
        \begin{align}
            &\sup_{\|x - x_c\|_\infty \leq \varepsilon}
            \nabla J(y)^\top (x - y) + \frac{\lambda_{\max}(M)}{2} \|(x - y)\|_2^2  = \notag \\
            & \quad \sum_{i=1}^{n_0} \big( \sup_{ |x_i - x_{c,i}| \leq \varepsilon}
            \nabla J(y)_i (x_i - y_i) + \frac{\lambda_{\max}(M)}{2} |x_i - y_i|^2 \big).
            \notag 
        \end{align}
        Each $x^*(y)$ entry can now be solved independently. Considering the $i-$th element of the previous summation,
        \begin{align}
            x^*_i(y) = \argmax_{ |x_i - x_{c,i}| \leq \varepsilon}
            \nabla J(y)_i  (x_i - y_i) + \frac{\lambda_{\max}(M)}{2} |x_i - y_i|^2. \notag
        \end{align}
        By comparing the value of the objective function at the extremes $ x_{c,i} \pm \varepsilon$,
        \begin{align}
            \nabla J(y)_i  (x_{c,i} + \varepsilon - y_i) + \frac{\lambda_{\max}(M)}{2} |x_{c,i} + \varepsilon - y_i|^2 \lessgtr \notag \\
            \nabla J(y)_i  (x_{c,i} - \varepsilon - y_i) + \frac{\lambda_{\max}(M)}{2} |x_{c,i} - \varepsilon - y_i|^2, \notag
        \end{align}
        $x^*(y)$ can be obtained. 
        \begin{align}\label{eq:linfResult}
           x^*(y) = x_c + \varepsilon \cdot\mathrm{sign}(\nabla J(y) -\lambda_{\max}(M)(y - x_{c})).
       \end{align}
        Finally, for $p \notin \{2, \infty\}$, we relax the problem as the case of $p=\infty$ using \eqref{eq:normIneq} as 
        $\|z\|_p \leq n_0^\frac{1}{p}\|z\|_\infty$ , therefore the constraint $\|z\|_\infty \leq \overline{\varepsilon} =  n_0^\frac{-1}{p} \varepsilon$ implies $\|z\|_p \leq \varepsilon$.
        We utilize this relaxation to obtain an upper bound to the problem.
        \begin{align}
            &\sup_{\|x - x_c\|_p \leq \varepsilon} \nabla J(y)^\top (x - y) + \frac{\lambda_{\max}(M)}{2} \|x - y\|_2^2 \leq \notag \\
            &\qquad \sup_{\|x - x_c\|_\infty \leq n_0^\frac{-1}{p} \varepsilon} \nabla J(y)^\top (x - y) + \frac{\lambda_{\max}(M)}{2} \|x - y\|_2^2. \notag
        \end{align}       
        Where the upper bound to the right-hand side can be computed according to \eqref{eq:linfResult}.
    \end{proof}

\medskip
\subsubsection*{Proof of \Cref{prop:heuristicY}}
    We first state the following lemma.
    \begin{lemma}\label{lemma:optimalY}
        Consider the maximizer $x^*(y)$ defined in \Cref{prop:firstOrderatY} for $p=\infty$: 
        \[
        x^*(y) = x_c + \varepsilon \cdot\mathrm{sign}(\nabla J(y) -\lambda_{\max}(M)(y - x_{c})).
        \]
        Then we have $x^*(x_c + \eta \hat{\delta}) = x^*(x_c)$ 
        for all $0 \leq \eta \leq \min_i \frac{|\nabla J(x_c)_i|}{\lambda_{\max}(M) (\|\hat{\delta}\| + |\hat{\delta}_i|)}$. 
    \end{lemma}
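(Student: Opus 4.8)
The plan is to reduce the claimed equality of the two maximizers to a statement about the coordinate-wise sign pattern of a perturbed gradient, and then to control that perturbation using the Hessian bound. First I would evaluate the maximizer from \Cref{prop:firstOrderatY} at $y=x_c$: since $y-x_c=0$, the formula collapses to $x^*(x_c)=x_c+\varepsilon\,\mathrm{sign}(\nabla J(x_c))$, so that $\hat{\delta}=x^*(x_c)-x_c=\varepsilon\,\mathrm{sign}(\nabla J(x_c))$. Evaluating instead at $y=x_c+\eta\hat{\delta}$ gives $x^*(x_c+\eta\hat{\delta})=x_c+\varepsilon\,\mathrm{sign}\big(\nabla J(x_c+\eta\hat{\delta})-\lambda_{\max}(M)\,\eta\hat{\delta}\big)$. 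Hence the lemma is equivalent to showing that the sign of each coordinate of $\nabla J(x_c+\eta\hat{\delta})-\lambda_{\max}(M)\eta\hat{\delta}$ agrees with that of the corresponding coordinate of $\nabla J(x_c)$ for every $\eta$ in the stated range.

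Next I would isolate, coordinate-wise, the deviation of this argument from $\nabla J(x_c)$. Writing $g=\nabla J(x_c)$, the $i$-th entry of the argument is $g_i+\Delta_i$ with $\Delta_i=\big(\nabla J(x_c+\eta\hat{\delta})_i-g_i\big)-\lambda_{\max}(M)\eta\hat{\delta}_i$. The triangle inequality then gives $|\Delta_i|\leq\|\nabla J(x_c+\eta\hat{\delta})-\nabla J(x_c)\|_\infty+\lambda_{\max}(M)\eta|\hat{\delta}_i|$. To bound the first term I would invoke the Hessian bound \eqref{eq: bounds on Hessian} with $M=\lambda_{\max}(M)I$ and $N=-M$, which yields $\|\nabla^2 J(x)\|_2\leq\lambda_{\max}(M)$ on $\mathcal{X}$, i.e. $\nabla J$ is $\lambda_{\max}(M)$-Lipschitz in $\ell_2$. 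Integrating the Hessian along the segment, $\nabla J(x_c+\eta\hat{\delta})-\nabla J(x_c)=\int_0^1\nabla^2 J(x_c+t\eta\hat{\delta})\,\eta\hat{\delta}\,dt$, and bounding under the integral gives $\|\nabla J(x_c+\eta\hat{\delta})-\nabla J(x_c)\|_2\leq\lambda_{\max}(M)\eta\|\hat{\delta}\|_2$; since $\|\cdot\|_\infty\leq\|\cdot\|_2$ this controls the $\ell_\infty$ term as well. Combining, $|\Delta_i|\leq\lambda_{\max}(M)\eta\big(\|\hat{\delta}\|_2+|\hat{\delta}_i|\big)$.

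Finally I would close the argument by sign preservation. For $\eta\leq\min_i\frac{|g_i|}{\lambda_{\max}(M)(\|\hat{\delta}\|+|\hat{\delta}_i|)}$ the bound above forces $|\Delta_i|\leq|g_i|$ for every $i$, so $g_i+\Delta_i$ does not cross zero and $\mathrm{sign}(g_i+\Delta_i)=\mathrm{sign}(g_i)$. Substituting back into the two formulas from the first step yields $x^*(x_c+\eta\hat{\delta})=x^*(x_c)$, as required.

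The main obstacle I anticipate is the careful handling of the norms: the Hessian bound is intrinsically an $\ell_2$ statement, so the derivation naturally produces $\|\hat{\delta}\|_2$, and the argument hinges on reading the unsubscripted $\|\hat{\delta}\|$ in the lemma as $\|\hat{\delta}\|_2$ together with the coordinate-wise passage $\|\cdot\|_\infty\leq\|\cdot\|_2$. A secondary subtlety is the boundary case where some $g_i=0$ (equivalently $\hat{\delta}_i=0$): there the corresponding term in the minimum vanishes and forces $\eta=0$, so the claimed equality holds trivially, and one should also note that sign preservation at $|\Delta_i|=|g_i|$ holds because the perturbation can at most drive $g_i+\Delta_i$ to the same closed half-line as $g_i$.
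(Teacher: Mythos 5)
Your proof is correct and takes essentially the same route as the paper's: both reduce the claim to coordinate-wise sign preservation of $\nabla J(y)-\lambda_{\max}(M)\,\eta\hat{\delta}$ relative to $\nabla J(x_c)$, control the gradient perturbation via the $\lambda_{\max}(M)$-Lipschitz continuity of $\nabla J$ implied by the Hessian bound, and arrive at the identical threshold on $\eta$. The only difference is organizational: you fold the paper's two explicit sign cases ($\nabla J(x_c)_i \geq 0$ and $\nabla J(x_c)_i \leq 0$) into a single triangle-inequality estimate $|\Delta_i|\leq\lambda_{\max}(M)\,\eta\bigl(\|\hat{\delta}\|+|\hat{\delta}_i|\bigr)$, which is a slightly cleaner packaging of the same argument.
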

    \begin{proof}
        Choose $y = x_c + \eta\hat{\delta}$.
        To obtain the valid range of $\eta$, we must ensure that $x^*(x_c) = x^*(y)$, where
        \begin{align}
            \begin{cases}
                x^*(x_c)_i = x_{ci} + \varepsilon \mathrm{sign}(\nabla J(x_c)_i) \\
                x^*(y)_i = x_{ci} + \varepsilon \mathrm{sign}(\nabla J(y)_i -  \eta \lambda_{\max}(M) \hat{\delta}_i) \\
            \end{cases}\notag
        \end{align}
        To do so, we use the following inequality
        \begin{align}
            \|\nabla J(y) -  \nabla J(x_c)\| \leq  \lambda_{\max}(M) \eta \|\hat{\delta}\|, \notag
        \end{align}
        which implies
        \begin{align}\label{eq:proofIneq}
             -\lambda_{\max}(M) \eta \|\hat{\delta}\| \leq \nabla J(y)_i - \nabla J(x_c)_i \leq  \lambda_{\max}(M) \eta \|\hat{\delta}\|.
        \end{align}
        for $i=1,\cdots,n_x$. We use these bounds to ensure that the sign of 
        $\nabla J(y)_i -  \eta \lambda_{\max}(M) \hat{\delta}_i$ is the same as $\nabla J(x_c)$.
        To see this, first assume that $\nabla J(x_c)_i \geq 0$. We then wish to ensure that $\nabla J(y)_i - \eta \lambda_{\max}(M) \hat{\delta}_i) \geq 0$ holds. 
        Using \eqref{eq:proofIneq}
        \begin{align}
            \nabla J(y)_i \! - \! \eta \lambda_{\max}(M) \hat{\delta}_i \!
            \geq \!  
            \nabla J(x_c)_i \! - \! \eta \lambda_{\max}(M) (\|\hat{\delta}\| + \hat{\delta}_i).  \notag
        \end{align}
        The right-hand side is non-negative if 
        $$0 \leq \eta \leq \min_i \frac{\nabla J(x_c)_i}{\lambda_{\max}(M) (\|\hat{\delta}\| + \delta_i)}.$$
        
        Now assume $\nabla J(x_c)_i \leq 0$. We wish to ensure that $\nabla J(y)_i - \eta \lambda_{\max}(M) \hat{\delta}_i) \leq 0$ also holds. 
        Similarly
        \begin{align}
            \nabla J(y)_i \! - \! \eta \lambda_{\max}(M) \hat{\delta}_i \!
            \leq \!  
            \nabla J(x_c)_i \! + \! \eta \lambda_{\max}(M) (\|\hat{\delta}\| - \hat{\delta}_i). \notag
        \end{align}
        The left-hand side is non-positive if 
        $$0 \leq \eta \leq \min_i \frac{-\nabla J(x_c)_i}{\lambda_{\max}(M) (\|\hat{\delta}\| - \delta_i)}.$$

        Considering both results together, and noting that $\hat{\delta}_i = \varepsilon \mathrm{sign}(\nabla J(x_c)_i)$, the range for $\eta$ can be found
        $$0 \leq \eta \leq \min_i \frac{|\nabla J(x_c)_i|}{\lambda_{\max}(M) (\|\hat{\delta}\| + |\hat{\delta}_i|)}.$$
    \end{proof}

    We utilize this lemma to prove the proposition.
    
    \begin{proof}
    The upper bound obtained at any $y$ can be calculated as
    \begin{align}
        \overline{J}^1_{\mathcal{X}}(y)\!=\!J(y) + \nabla J(y)^\top (x^*(y) - y) + \frac{\lambda_{\max}(M)}{2} \|x^*(y) - y\|_2^2. \notag
    \end{align}
    Define $\hat{\delta} = x^*(x_c) - x_c$ and set $y = x_c + \eta \hat{\delta}$, where $0 \leq \eta \leq \min_i \frac{|\nabla J(x_c)_i|}{\lambda_{\max}(M) (\|\hat{\delta}\| + |\hat{\delta}_i|)}$. By \Cref{lemma:optimalY} we have $x^*(y) = x^*(x_c)$.
    We can now rewrite the preceding equality as
    \begin{align}\label{eq:opty}
        \overline{J}^1_{\mathcal{X}}(y)\!=\!J(y) + (1 - \eta) \nabla J(y)^\top \hat{\delta} + \frac{\lambda_{\max}(M)}{2} (1 - \eta)^2 \|\hat{\delta}\|_2^2,
    \end{align}
    where we have used the equality $x^*(y) - y = (1 - \eta)\hat{\delta}$.

    The following quadratic bound can be written for $J(y)$, 
    \begin{align}
        J(y) \leq J(x_c) +  \nabla J(x_c)^\top \eta \hat{\delta} + \eta^2 \frac{\lambda_{\max}(M)}{2}\|\hat{\delta}\|^2. \notag
    \end{align}
    By substituting this quadratic bound in \eqref{eq:opty}, we obtain
    \begin{align}
        \overline{J}^1_{\mathcal{X}}(y) \leq 
        J(x_c) +  \nabla J(x_c)^\top \eta \hat{\delta} + \eta^2 \frac{\lambda_{\max}(M)}{2}\|\hat{\delta}\|^2 + \notag\\
        (1 - \eta) \nabla J(y)^\top \hat{\delta} + \frac{\lambda_{\max}(M)}{2} (1 - \eta)^2 \|\hat{\delta}\|_2^2. \notag
    \end{align}
    We want to show that this upper bound is less than the original bound, 
    \begin{align}
        \overline{J}^1_{\mathcal{X}}(x_c) = J(x_c) + \nabla J(x_c)^\top \hat{\delta} + \frac{\lambda_{\max}(M)}{2} \|\hat{\delta}\|_2^2. \notag
    \end{align}
    By comparing these two bounds, we obtain the following
    \begin{align}
         (1-\eta) (\nabla J(y) - \nabla J(x_c))^\top \hat{\delta} \leq 2\eta (1-\eta)  \frac{\lambda_{\max}(M)}{2} \|\hat{\delta}\|_2^2. \notag
    \end{align}
    If $\eta \leq 1$, we will have
        $$(\nabla J(y) - \nabla J(x_c))^\top \hat{\delta} \leq 
        \eta \lambda_{\max}(M) \|\hat{\delta}\|_2^2, \notag$$
    which is always true.
    To see this,  it suffices to apply the Cauchy-Schwarz inequality on the left-hand side and use the Lipschitz continuity of $\nabla J$.
    Therefore,
        $\overline{J}^1_{\mathcal{X}}(y) \leq \overline{J}^1_{\mathcal{X}}(x_c),$
    for any 
    $$0 \leq \eta \leq \min \big(\min_i \frac{|\nabla J(x_c)_i|}{\lambda_{\max}(M) (\|\hat{\delta}\| + |\hat{\delta}_i|)} , 1\big).$$
    \vspace{-5mm}
    \end{proof}

\subsection{First-order Upper Bounds for Two-layer Neural Networks}
    %
    For two-layer neural networks, the matrix $M$ bounding the Hessian is fully known (see $\S$\ref{subsec:Two-layer neural networks}). If $M \preceq 0$, then the maximization problem in \eqref{eq:subeq2} becomes a constrained concave problem that can be solved using methods such as projected gradient ascent.
    In the special case of $p=2$, the problem can be cast as a convex QCQP, which can be solved efficiently. Hence, we focus on the case $M \not\preceq 0$. 
        
        For $p=2$, \eqref{eq:subeq2} is a non-convex QCQP with only a single constraint, implying that strong duality holds \cite{park2017general}. The dual problem can be written as
        \begin{align} \label{eq:dualProblemUB}
        \begin{split}
            \min_{\lambda} &\quad \lambda \varepsilon^2 - \frac{1}{4}\nabla J(x_c)^\top 
            (\lambda \mathit{I} - \frac{M}{2})^\dagger 
            \nabla J(x_c)\\
            \text{s.t.}
                &\quad \lambda \geq 0, 
                \quad \frac{M}{2} \preceq \lambda \mathit{I},
                \quad \nabla J(x_c) \in \mathcal{R}(\frac{M}{2} - \lambda \mathit{I}). 
        \end{split}
        \end{align}
        Using the epigraph form and Schur Complements, this problem can be written as an SDP as follows.
        \begin{align}
        \begin{split}
        \max &\quad \alpha\\
        \text{s.t} &\quad
            \begin{bmatrix}
                 \lambda \mathit{I} -\frac{M}{2} & -\frac{1}{2} \nabla J(x_c) \\
                -\frac{1}{2} \nabla J(x_c)^\top & \lambda \varepsilon^2 - \alpha
            \end{bmatrix} \succeq 0, \quad \lambda \geq 0
        \end{split} \notag
        \end{align}
        Rather than solving the SDP, we can directly solve \eqref{eq:dualProblemUB} using bisection on the dual variable $\lambda \geq 0$.
    
        When $p \in \{1,\infty\}$, the problem becomes a non-convex QP with more than one constraint. The standard approach is to use Lagrangian relaxation and solve the corresponding dual SDP.
        Since the dual function is always convex, a valid upper bound can be calculated at the cost of enduring the duality gap.
        However, the dual SDP will have more than one decision variable, and hence, cannot be efficiently solved using bisection. 
        Another approach, which we adopt in this paper for any $p \neq 2$, is to directly relax the constraint $\|\delta\|_p \leq \varepsilon$ by a single quadratic constraint using \eqref{eq:normIneq}.
        This is equivalent to over-approximating the input set by a minimum volume ellipsoid. Thus, we can write
        \begin{align} 
            \overline{J}^1_{\mathcal{X}}(x_c) &\leq J(x_c) +
            \sup_{\|\delta\|_2 \leq \varepsilon n_0^{\min(0,\frac{1}{2}-\frac{1}{p})}} ( \nabla J(x_c)^\top \delta + \frac{1}{2} \delta^\top M \delta), \notag
        \end{align}
        We can solve this problem using its dual formulation and bisection, with zero duality gap.

        The case of $M \succeq 0$, is covered in the following proposition.
        
        \smallskip
        \begin{proposition}\label{prop:NDProb}
            For $M \succeq 0$, the solution to the optimization of \eqref{eq:subeq2} lies at some extreme point of the feasible set $\|\delta\|_p \leq \varepsilon$. Specifically, for $p \in \{1, \infty\}$ the solution must lie on one of the $2^{n_0}$ vertices. 
        \end{proposition}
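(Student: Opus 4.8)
The plan is to read \eqref{eq:subeq2} as the maximization of a \emph{convex} function over a convex compact set, and then apply Bauer's maximum principle to localize a maximizer at an extreme point of the feasible region. The crucial observation is that the hypothesis $M \succeq 0$ makes the objective convex, which reverses the situation encountered in the $M \preceq 0$ case: whereas a concave objective would be handled by projected gradient ascent, a convex objective is driven to the boundary and, more precisely, to the extreme points of the constraint set.

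First I would record that the objective $g(\delta) = \nabla J(y)^\top \delta + \tfrac{1}{2}\delta^\top M \delta$ has Hessian $\nabla^2 g = M \succeq 0$, so $g$ is convex (an affine term plus a positive-semidefinite quadratic form). Second, I would note that the feasible set $C = \{\delta : \|\delta\|_p \le \varepsilon\}$ is convex and compact for every $p \ge 1$, and that $g$ is continuous. Third, I would invoke Bauer's maximum principle \cite{bauer1958minimalstellen}, already used in the proof of \Cref{prop:firstOrderatY}, which guarantees that a continuous convex function on a compact convex set attains its maximum at an extreme point of that set. Hence some maximizer $\delta^\star$ of \eqref{eq:subeq2} is an extreme point of the $\varepsilon$-scaled $\ell_p$ ball.

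The remaining task is to identify these extreme points in the polyhedral cases. For $p = \infty$, the feasible set is the hypercube $[-\varepsilon, \varepsilon]^{n_0}$, whose extreme points are exactly the $2^{n_0}$ vertices with $\delta^\star_i \in \{-\varepsilon, +\varepsilon\}$ for each coordinate $i$; this yields the claimed $2^{n_0}$ candidate vertices. For $p = 1$, the feasible set is the cross-polytope, whose extreme points are the $2 n_0$ signed scaled axis vectors $\pm \varepsilon e_i$. The step requiring the most care is this vertex characterization rather than the application of Bauer's principle itself: the $2^{n_0}$ count in the statement corresponds specifically to the $p=\infty$ hypercube, while the $p=1$ cross-polytope contributes only $2 n_0$ vertices, so I would state the extreme-point set explicitly for each $p$ to ensure the subsequent vertex-enumeration bounding step is well defined.
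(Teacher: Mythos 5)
Your proposal is correct and follows essentially the same route as the paper: both recognize that $M \succeq 0$ makes \eqref{eq:subeq2} a convex maximization over a compact convex set and invoke Bauer's maximum principle to place a maximizer at an extreme point, then specialize to the polyhedral cases. Your explicit vertex characterization is in fact slightly sharper than the paper's statement, correctly noting that the $2^{n_0}$ count applies to the $p=\infty$ hypercube while the $p=1$ cross-polytope has only $2n_0$ extreme points $\pm \varepsilon e_i$.
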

        \begin{proof}
            When $M \succeq 0$, the problem becomes a convex maximization problem with a convex constraint set. 
            Bauer's maximum principle \cite{bauer1958minimalstellen} states that the solution to this problem must lie at some extreme point of the set $\|\delta\|_p \leq \varepsilon$. 
            Therefore, for $p \in \{1, \infty\}$ where the feasible set becomes a polyhedron, solving this problem reduces to inspecting all the vertices which can be efficiently done in low dimensional spaces.
        \end{proof}


\AtNextBibliography{\footnotesize}
\printbibliography

\end{document}